\documentclass{article}
\pdfoutput=1
\usepackage[margin=1.2in]{geometry}
\usepackage[numbers]{natbib}
\usepackage{stfloats} 
\usepackage{framed}
\usepackage[most]{tcolorbox}
\usepackage{authblk}

\author[1]{Spyros Dragazis} \author[1,2]{Aldo Pacchiano} \affil[1]{Boston University} \affil[2]{Broad Institute of MIT and Harvard}

\usepackage{graphicx} 
\usepackage{amsmath,amscd,amssymb,amsthm}
\usepackage{thmtools}
\usepackage{thm-restate}
\declaretheorem[name=Theorem]{thm}
\newtheorem{assum}{Assumption}[section]
\newtheorem{lemma}{Lemma}
\newtheorem*{lemma*}{Lemma}  
\newtheorem{proposition}{Proposition}

\usepackage{algorithm}      
\usepackage{algpseudocode}  

\usepackage{enumitem}
\usepackage[dvipsnames]{xcolor}  
\usepackage{tcolorbox} 
\usepackage{manfnt} 
\usepackage{bbding}    

\usepackage[backref=page]{hyperref}
\hypersetup{
    colorlinks,
    citecolor=blue,
    filecolor=blue,
    linkcolor=blue,
    urlcolor=blue
}
\renewcommand*{\backref}[1]{}
\renewcommand*{\backrefalt}[4]{%
    \ifcase #1 %
    \or     #2%
    \else   #2%
    \fi
}

\usepackage{mymacros}

\usepackage{fontawesome5}

\usepackage{mathtools}

\usepackage{manfnt} 

\usepackage{parskip}

\usepackage{pifont} 

\usepackage{ulem} 

\usepackage{nicefrac}

\usepackage{cleveref}

\crefname{proposition}{Proposition}{Propositions}
\Crefname{proposition}{Proposition}{Propositions}

\crefname{assum}{Assumption}{Assumptions}
\Crefname{assum}{Assumption}{Assumptions}

\crefname{thm}{Theorem}{Theorems}
\Crefname{thm}{Theorem}{Theorems}

\crefname{defn}{Definition}{Definitions}
\Crefname{defn}{Definition}{Definitions}

\usepackage{subcaption}

\newcounter{constant} 
\newcommand{\newconstant}[1]{\refstepcounter{constant}\label{#1}} 
\newcommand{\useconstant}[1]{\kappa_{\ref{#1}}}

\usepackage{color-edits}

\usepackage{thm-restate} 

\addauthor[Spyros]{sd}{teal}
\addauthor[Aldo]{ap}{forestgreen}
\usepackage{mymacros}

\title{Safety by Design: Realized-Cost Constraints for Contextual Bandits with Continuous Actions}

\begin{document}

\maketitle

\begin{abstract}
Contextual bandits are a standard framework for sequential decision-making under uncertainty, with applications in clinical trials, dosage selection, recommendation systems, and autonomous systems.
Safety is central in many of these applications, since a single unsafe decision in settings such as dosage selection or autonomous driving can have catastrophic consequences.
A common way to model safety in bandit problems is to associate each action with both a reward signal and a cost signal, and to optimize reward subject to constraints on cost.
Most existing safety-constrained bandit models enforce safety by requiring the expected cost of each action to remain below a prescribed threshold.
However, this may be insufficient in heteroscedastic settings, where the chosen action affects not only the expected reward and cost, but also the variability of the observed outcomes.
We study contextual bandits with one-dimensional continuous actions and stage-wise high-probability constraints on the realized cost.
We propose High-Probability Constrained UCB, an optimistic-pessimistic algorithm that explores for reward while conservatively estimating the safe action set.
For linear reward and cost models, we prove a tight $\tilde{\CO}(d\sqrt{T})$ regret bound, and we extend the analysis to general function classes using the eluder dimension.
Experiments show that enforcing realized-cost safety substantially reduces violations compared with expected-cost constrained baselines.
\end{abstract}

\section{Introduction}
\label{sec:intro}

Bandit algorithms \citep{thompson1933likelihood,auer2002using,lattimore2020bandit} have been applied across a wide range of domains, including reinforcement learning with human feedback (RLHF) \citep{christiano2017deep,xie2024exploratory}, clinical trials \citep{matsuura2022optimal,lee2020contextual,aziz2021multi,villar2015multi,bastani2020online}, recommendation systems \citep{wang2022context}, dynamic pricing \citep{mueller2019low}, large language models \citep{bouneffouf2024tutorial}, and fair allocation \citep{procaccia2024honor}; see \cite{bouneffouf2019survey} for a broader overview.
In these domains, a learner interacts sequentially with an unknown environment, aiming to learn from observations while maximizing cumulative reward through its decisions.
Recently, there has been increasing interest in contextual bandits \citep{foster2023foundations, lattimore2020bandit}, where the learner observes a context before selecting an action from a potentially large or continuous action space.

Within this general framework, constrained bandit algorithms are particularly relevant when applications involve resource limitations or safety requirements \citep{dai2023safe,wachi2020safe}.
Such constraints can take different forms, including budget-type constraints as in knapsack bandits \citep{badanidiyuru2018bandits}, fairness constraints \citep{joseph2016fairness}, and models with simultaneous reward and cost signals \citep{amani2019linear,pacchiano2021stochastic,pacchiano2024contextual}.
The latter formulation is especially natural in applications such as advertising and drug administration, where decisions must balance utility against risk.
In drug dosage problems, for example, treatments generate an efficacy signal, interpreted as reward, and a toxicity signal, interpreted as cost; the goal is to maximize therapeutic benefit while ensuring that toxicity remains below a prescribed threshold $\tau$, as in Phase II clinical trials.
For example, related work by \cite{lee2020contextual} considers a binary reward--cost model that enforces constraints on the average cost over time.

While cumulative or averaged cost control is sufficient in some applications, many safety-critical settings require stage-wise constraints, namely control of the cost at each time step.
To address this requirement, researchers have studied the well-established safe linear bandit problem \citep{hutchinson2024directional,pacchiano2024contextual,moradipari2019safe}, in which, at each round $t$, the selected action generates both a reward signal $r_t \in \R$ and a cost signal $c_t \in \R$.
The objective is to maximize expected cumulative reward while ensuring that the expected cost remains below a known safety threshold $\tau$ at every round, with this constraint holding with high probability.

Guarantees on the expected value of the cost signal are natural in homoscedastic models, where the level of noise is not affected by the chosen action.
However, in applications such as autonomous driving, robotic control, and drug treatment, the variability of the outcome may itself depend on the action level.
For example, a faster robotic motion may lead to a wider distribution of possible next positions due to friction or sensor noise, a larger velocity in autonomous driving may be less predictable than a smaller one, and the effect of a drug may vary across dosage levels \cite{kpeglo2021identification}.
In such heteroscedastic settings, satisfying a constraint only in expectation does not preclude catastrophic outcomes at individual rounds.
This motivates algorithms that control the realized cost signal itself at each time step, rather than only its expectation.
With this motivation, we consider the following question.

\begin{center}
\textit{Is it possible to design constrained low-regret algorithms\\ that control realized cost at every round with high probability?}
\end{center}

In this work, we address this question in a heteroscedastic reward--cost model with one-dimensional continuous actions \cite{amani2019linear,pacchiano2024contextual,majzoubi2020efficient}.
Our goal is to control the realized cost signal $C_t$ at each time step by requiring it to satisfy the constraint $\P(C_t \le \tau) \ge 1-\delta$, where $\delta > 0$ is a user-specified confidence parameter.
As in prior work, the resulting guarantee accounts for both the noise in the cost observations and the internal randomization of the algorithm.

To achieve this objective, we propose a \textit{UCB-like} algorithm termed \textit{High Probability Constrained UCB}.
Our method is based on the \textit{Optimism in the Face of Uncertainty} (OFUL) principle \citep{auer2002using,abbasi2011improved} and does not require prior knowledge of an initial safe action, in contrast to several existing approaches \citep{pacchiano2024contextual}.
The algorithm applies in both stochastic and adversarial contextual settings, under the standard assumption of sub-Gaussian noise for both reward and cost signals.
Under this assumption, we construct a high-probability constraint event that ensures the realized cost remains below the safety threshold at every round.

We show that the proposed algorithm achieves a $T$-round regret bound of order $\tilde{\mathcal{O}}(d\sqrt{T})$, which is minimax optimal in its dependence on both the dimension $d$ and the time horizon $T$ for linear bandits \citep{lattimore2020bandit}.
In addition, we empirically validate the performance of our approach on both synthetic and real-world datasets.
Finally, we extend our analysis to settings with non-linear reward and cost functions, characterizing the regret in terms of the eluder dimension \citep{russo2013eluder}, a complexity measure for general function classes.

The main contributions of this work are as follows:
\begin{itemize}
    \item We introduce a safety framework for reward--cost constrained bandits in a heteroscedastic setting with one-dimensional continuous actions.
    \item We derive algorithms with tight regret guarantees for both linear and non-linear reward and cost models.
    \item We validate our method on both synthetic and real-world datasets, showing the importance of high-probability realized-cost guarantees in heteroscedastic settings.
\end{itemize}

\section{Model and Problem Formulation}
\label{sec:problemFormulation}

\textbf{Notation.}
We use the following notation throughout the paper.
For vectors $x,y \in \mathbb{R}^d$, we write $\langle x,y \rangle = x^\top y$ and $\langle x,y \rangle_{\mathbf{A}} = x^\top \mathbf{A} y$ for the standard and weighted inner products, respectively, where $\mathbf{A} \in \mathbb{R}^{d \times d}$ is positive definite.
Similarly, we define $\norm{x}_2 = \sqrt{\langle x,x \rangle}$ and $\norm{x}_{\mathbf{A}} = \sqrt{\langle x,x \rangle_{\mathbf{A}}}$ as the $\ell_2$ and weighted $\ell_2$ norms of $x$.
We denote the indicator function by $\indicator{\cdot}$.
Upper-case letters denote random variables, e.g., $X$, and lower-case letters denote their realizations, e.g., $X=x$.
We write $[T] = \{1,\ldots,T\}$.
Finally, $\widetilde{\mathcal O}$ denotes big-$\mathcal O$ notation up to logarithmic factors.

Inspired by bandit algorithms for adaptive dosage allocation \citep{matsuura2022optimal,lee2020contextual,aziz2021multi}, we consider the following formulation of actions, rewards, and costs.
At each round $t \in [T]$, the learner observes a $d$-dimensional context vector $X_t \in \mathbb{R}^d$, which may represent structured measurements, such as medical test results, or an embedding of unstructured information, such as text descriptions generated by a language model \citep{baheri2023llms}.
We impose no distributional assumptions on $X_t$; the contexts may be generated either stochastically or adversarially.
The learner then selects a one-dimensional continuous action $\alpha_t \in [0,1]$ \citep{majzoubi2020efficient}, which can be viewed as a continuous generalization of a finite set of dosage levels.
The environment generates reward and cost signals according to
\begin{align}
\label{eq:rewardCostSignal}
    R_t &:= g(\alpha_t) \cdot \left( r(X_t) + \xi_t^r \right), \\
    C_t &:= g(\alpha_t) \cdot \left( c(X_t) + \xi_t^c \right),
\end{align}
where $\xi_t^r$ and $\xi_t^c$ are sub-Gaussian noise terms, and $r(X_t)$ and $c(X_t)$ denote the underlying reward and cost functions.
The function $g:[0,1]\to[0,1]$ is known and strictly increasing, with $g(0)=0$ and $g(1)=1$.
In dosage applications, $g$ can be interpreted as a covariate-independent dose-response curve, such as a population-level efficacy or toxicity profile that can be estimated from prior studies.
\footnote{
A natural extension is to allow distinct dosage-response functions for the reward and cost, denoted by \(g_r(\cdot)\) and \(g_c(\cdot)\), respectively. While this extension is practically relevant, it introduces no additional theoretical difficulty, so we use a common function \(g(\cdot)\) to simplify the notation.
}
The normalization $g(1)=1$ is without loss of generality, since any constant scale can be absorbed into the reward and cost functions.
This formulation allows for nearly flat regions that model saturation or diminishing returns, while preserving invertibility.
In dosage-like applications, efficacy and toxicity are commonly assumed to increase with the dosage level; see \cite{aziz2021multi, lee2020contextual, le2009dose} and references therein.
In the linear setting, we assume $r(X_t) = \langle X_t, \theta^* \rangle$ and $c(X_t) = \langle X_t, \mu^* \rangle$ for unknown parameters $\theta^*, \mu^* \in \mathbb{R}^d$.
In more general settings, we require only that $r(X_t)$ and $c(X_t)$ are bounded.
The interaction protocol is summarized in \Cref{fig:Problem-Description}.

To illustrate the linear case, consider a medical treatment scenario in which the context $X_t \in \mathbb{R}^d$ represents a patient feature vector encoding medical attributes such as blood pressure, oxygen saturation, or laboratory test results.
In this model, there exist unknown parameters $\theta\opt, \mu\opt \in \mathbb{R}^d$ such that
\begin{align*}
    R_t &= g(\alpha_t) \cdot \left( \langle X_t, \theta\opt \rangle + \xi_t^r \right), \\
    C_t &= g(\alpha_t) \cdot \left( \langle X_t, \mu\opt \rangle + \xi_t^c \right).
\end{align*}
The vector $\theta\opt$ characterizes how patient-specific features influence treatment efficacy: a positive inner product $\langle X_t, \theta\opt \rangle$ indicates high expected benefit from treatment.
Similarly, the vector $\mu\opt$ captures how the same features relate to adverse effects or toxicity, with larger values of $\langle X_t, \mu\opt \rangle$ indicating increased risk of side effects.
The noise terms $\xi_t^r$ and $\xi_t^c$ represent stochastic variability not captured by the linear model, such as unobserved physiological factors or measurement errors.

The action $\alpha_t \in [0,1]$ represents the administered dosage level.
Through the known response function $g$, the dosage scales both the reward and cost signals, allowing the learner to trade off efficacy against toxicity.
For instance, a patient with elevated inflammatory markers, corresponding to a high value of $\langle X_t, \theta\opt \rangle$, may benefit substantially from treatment.
However, if the same patient exhibits impaired kidney function, corresponding to a high value of $\langle X_t, \mu\opt \rangle$, a reduced dosage may be preferable to mitigate adverse outcomes.
This formulation is consistent with standard monotonicity assumptions on efficacy--toxicity dose-response curves \cite{wages2011dose, le2009dose}, whereby increasing the dose tends to increase both therapeutic and toxic effects.

The same action-dependent scaling also makes the model heteroscedastic.
Indeed, the variability of the observed reward and cost depends on the selected dosage through $g(\alpha_t)$.
This captures the fact that aggressive treatments may lead to less predictable responses \citep{rawlins1974variability}.
In this sense, the model allows for stochastic departures from deterministic monotonicity, such as saturation or instability in biological responses.

As a result, the learner must select actions $\alpha_t$ not only to maximize expected cumulative reward, but also to control the variability induced by the treatment.
Our model belongs to the broader class of heteroscedastic bandits \citep{xu2023noise, mukherjee2024speed, weltz2023experimental, hsieh2019stay}, in which the noise distribution may depend on both the current context and the chosen action.
In more general formulations, the noise variance or sub-Gaussian parameter may scale as a quadratic form $\phi(X_t,A_t)^\top \Sigma\opt \phi(X_t,A_t)$, where $\phi(\cdot,\cdot): \CX \times \CA \to \mathbb{R}^d$ denotes a joint context--action embedding and $\Sigma\opt$ is an unknown positive definite matrix.
While our model adopts a simpler structure, it captures monotone dose-response behavior commonly observed in reward--cost systems and remains interpretable, a property that is particularly valuable in medical decision-making contexts.

\begin{figure}[t]
\centering
\begin{tabular}{l}
\hline
\textbf{Safe Bandit protocol} \\
\hline
\textbf{Input: } Horizon $T$ \\
For rounds $t = 1, 2, \ldots, T$: \\
\quad 1. {\learner} $\xleftarrow{X_t \in \mathbb{R}^d}$ {\env} \% context \\
\quad 2. {\learner} $\xrightarrow{\alpha_t \in [0,1]}$ {\env} \% action selection \\
\quad 3. {\learner} $\xleftarrow{R_t, C_t \in \mathbb{R}}$ {\env} \% reward, cost signals \\
\hline
\end{tabular}
\caption{Safe Bandit protocol.}
\label{fig:Problem-Description}
\end{figure}

\subsection{Assumptions}
\label{sec:assumptions}

Before presenting our analysis and results, we state the assumptions used throughout the paper.
These assumptions are standard in the literature on constrained contextual bandits; see \cite{hutchinson2024directional,pacchiano2024contextual,moradipari2019safe} and references therein.

\newconstant{Cr}
\newconstant{Cc}

\begin{assum}[Sub-Gaussian noise]
\label{ass:noise-sub-gaussian}
For all $t\in[T]$, the reward and cost noise random variables $\xi_t^r$ and $\xi_t^c$ are conditionally sub-Gaussian.
That is, for all $\lambda\in\mathbb R$, there exist constants $\useconstant{Cr}, \useconstant{Cc} >0$ such that
\begin{align*}
&\mathbb{E}[\xi_t^r \mid \mathcal{H}_{t-1}] = 0,
\qquad
\mathbb{E}[\exp(\lambda \xi_t^r) \mid \mathcal{H}_{t-1}]
\leq \exp(\lambda^2 \useconstant{Cr}^2/2), \\
&\mathbb{E}[\xi_t^c \mid \mathcal{H}_{t-1}] = 0,
\qquad
\mathbb{E}[\exp(\lambda \xi_t^c) \mid \mathcal{H}_{t-1}]
\leq \exp(\lambda^2 \useconstant{Cc}^2/2),
\end{align*}
where $\mathcal H_t :=
\sigma\!\left(
X_1,\alpha_1,R_1,C_1,\ldots,
X_t,\alpha_t,R_t,C_t
\right).
$ is the filtration containing the history up to the end of round $t$.
\end{assum}

\newconstant{S}
\begin{assum}[Bounded parameters]
\label{ass:bounded-reward-cost-param}
There exists a known constant $\useconstant{S} > 0$ such that $\|\theta\opt\|_2 \leq \useconstant{S}$ and $\|\mu\opt\|_2 \leq \useconstant{S}$.
\end{assum}

\newconstant{L}
\begin{assum}[Bounded contexts]
\label{ass:bounded-contexts}
The $\ell_2$-norms of all contexts are bounded by a known constant $\useconstant{L} > 0$, i.e.,
\begin{equation*}
\max_{t\in[T]}\;\|X_t\|_2 \leq \useconstant{L}.   
\end{equation*}
\end{assum}

\begin{assum}[Positive toxicity threshold]
\label{ass:positive-threshold}
The toxicity threshold satisfies $\tau >0$.
\end{assum}

\subsection{Constraint Formulation}
\label{sec:constraint}

The learner's goal is to maximize expected cumulative reward over $T$ rounds,
\[
\mathbb{E}\!\left[\sum_{t=1}^T g(\alpha_t)\langle X_t,\theta\opt\rangle\right],
\]
while ensuring that the realized cost remains below a known safety threshold with high probability.
Our algorithm takes as input a confidence level $\delta$, which controls the probability of violating the constraint due to stochastic fluctuations in the cost signal.
Thus, unlike formulations that constrain only the expected cost, our safety requirement depends explicitly on the tail behavior of the realized cost.

If the noise distribution were fully known, for instance Gaussian with known variance, one could potentially enforce the constraint directly using the corresponding quantiles.
In our setting, however, the noise distribution is unknown and we assume only a sub-Gaussian upper bound.
We therefore rely on distribution-free high-probability bounds and formulate the following sufficient condition.

\begin{restatable}{lemma}{constraintFormulation}
\label{lem:constraintFormulation}
If the selected action $\alpha_t$ satisfies
\[
g(\alpha_t)\left(
\langle X_t,\mu\opt\rangle
+
\useconstant{Cc}\sqrt{2\log\left(\tfrac{1}{\delta}\right)}
\right)
\leq \tau,
\]
then $\P(C_t \leq \tau \mid \mathcal{H}_{t}) \geq 1-\delta$.
\end{restatable}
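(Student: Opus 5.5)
The plan is to reduce the realized-cost event to a one-sided tail event for the cost noise $\xi_t^c$, and then apply the sub-Gaussian Chernoff bound from \Cref{ass:noise-sub-gaussian}. One point about the statement comes first. The conditioning must be on the information available when $\alpha_t$ is chosen, namely $\mathcal{H}_{t-1}$ together with $X_t$ (and the algorithm's internal randomness). Under that conditioning $g(\alpha_t)$ and $\langle X_t,\mu\opt\rangle$ are fixed and $\xi_t^c$ is the only source of randomness. I will read $\mathcal{H}_t$ in the statement in this sense, since conditioning on the full $\mathcal{H}_t$ would make $C_t$ deterministic. I also assume the sub-Gaussian bound holds conditionally on this pre-action $\sigma$-field, which is the standard reading of \Cref{ass:noise-sub-gaussian}.

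The first step is the trivial case $g(\alpha_t)=0$, i.e.\ $\alpha_t=0$. Then $C_t=0\le\tau$ surely, by \Cref{ass:positive-threshold}. Otherwise $g(\alpha_t)>0$, and dividing by it gives
\[
\{C_t>\tau\}=\Bigl\{\xi_t^c>\tfrac{\tau}{g(\alpha_t)}-\langle X_t,\mu\opt\rangle\Bigr\}.
\]
The hypothesis, divided by $g(\alpha_t)>0$, says exactly that $\tfrac{\tau}{g(\alpha_t)}-\langle X_t,\mu\opt\rangle\ge \useconstant{Cc}\sqrt{2\log(1/\delta)}=:\varepsilon$. Hence $\{C_t>\tau\}\subseteq\{\xi_t^c>\varepsilon\}$, and this inclusion holds conditionally because the threshold is measurable with respect to the conditioning $\sigma$-field.

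The second step is the Chernoff argument. By Markov's inequality and the conditional mgf bound, for every $\lambda>0$,
\[
\P(\xi_t^c>\varepsilon\mid\cdot)\le \exp\bigl(\lambda^2\useconstant{Cc}^2/2-\lambda\varepsilon\bigr).
\]
Choosing $\lambda=\varepsilon/\useconstant{Cc}^2$ gives the bound $\exp(-\varepsilon^2/(2\useconstant{Cc}^2))=\delta$. Therefore $\P(C_t\le\tau\mid\cdot)\ge1-\delta$.

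I expect no step to be a real obstacle. The only care needed is the measurability and conditioning issue described above, and the sign bookkeeping when dividing by $g(\alpha_t)$, which is handled by separating out the case $g(\alpha_t)=0$.
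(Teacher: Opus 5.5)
Your proof is correct and is essentially the paper's argument: handle $g(\alpha_t)=0$ using $\tau>0$, divide by $g(\alpha_t)>0$ to reduce $\{C_t>\tau\}$ to a one-sided tail event for $\xi_t^c$ at level at least your $\varepsilon$, and apply the sub-Gaussian tail bound, which you derive via Chernoff instead of citing it. Your conditioning remark is also right and repairs a sloppiness in the paper, which conditions on $\mathcal{H}_t$ even though $\mathcal{H}_t$ contains $C_t$; the argument only goes through for the pre-observation $\sigma$-field (history, $X_t$, $\alpha_t$), with sub-Gaussianity assumed given that $\sigma$-field, exactly as you state.
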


The proof is provided in \Cref{app:constraint} and follows directly from a concentration inequality for sub-Gaussian random variables; see \cite{lattimore2020bandit}, Chapter~5, or \cite{vershynin2018high}, Chapter~2.

The key difference from standard expected-cost formulations is conceptual rather than merely algebraic: the constraint is imposed on the realized cost, and the confidence parameter $\delta$ specifies the tolerated probability of an unsafe realization. In our action-scaled heteroscedastic model, the selected action controls both the mean cost and the magnitude of the cost fluctuations, making this realized-cost requirement a meaningful decision constraint.

Accordingly, we define the feasible action set at round $t$ as
\begin{equation}
\label{eq:setOfFeasibleDosages}
    \mathcal{A}^{f}_t(\delta)
    :=
    \left\{
    \alpha \in [0,1]:
    g(\alpha)
    \left(
    \langle X_t,\mu\opt\rangle
    +
    \useconstant{Cc}\sqrt{2\log\left(\tfrac{1}{\delta}\right)}
    \right)
    \leq \tau
    \right\}.
\end{equation}
Any action $\alpha_t \in \mathcal{A}^{f}_t(\delta)$ ensures that
$\P(C_t \leq \tau \mid \mathcal{H}_t) \geq 1-\delta$.

Given the feasible action set, let
\[
\alpha_t\opt
\in
\arg\max_{\alpha\in\mathcal{A}^{f}_t(\delta)}
g(\alpha)r(X_t)
\]
denote an optimal feasible action at round $t$.\footnote{
In prior work on contextual bandits with stage-wise safety constraints
\citep{pacchiano2024contextual,amani2019linear}, the learner's performance is typically
measured relative to the optimal safe action.
This choice of comparator reflects the requirement that the safety constraint be satisfied
at every round with high probability; accordingly, the resulting regret guarantees are also
stated with high probability.
}
We measure the learner's performance through the $T$-round constrained pseudo-regret
\begin{equation}\label{eq:reg_def}
\mathcal{R}_{\mathcal{C}}(T)
:=
\sum_{t=1}^{T}
\left[
g(\alpha_t\opt)r(X_t)
-
g(\alpha_t)r(X_t)
\right].
\end{equation}

Our algorithm guarantees that the selected action belongs to an estimated subset of $\mathcal{A}^{f}_t(\delta)$ with probability at least $1-\delta'$, where $\delta'$ is the confidence level associated with estimating $\mu\opt$.
We keep $\delta$ and $\delta'$ separate to distinguish two sources of uncertainty: the randomness of the realized cost, governed by the noise $\xi_t^c$, and the statistical uncertainty in the estimate of $\mu\opt$, governed by the observed history $\mathcal{H}_t$.
One may combine these events by a union bound, but we state them separately to make the two roles explicit.

We now address safety in the initial round, where no observations are available to estimate $\theta\opt$ or $\mu\opt$.
In contrast to prior work \citep{pacchiano2024contextual,moradipari2019safe}, our model does not require prior knowledge of an initially safe action.
Because $\tau>0$ and $g(0)=0$, the zero action is always safe.
Moreover, using the boundedness assumptions on $\mu\opt$ and $X_t$, we can characterize a nontrivial initial safe interval.
Specifically, since $\langle X_1,\mu\opt\rangle \leq \useconstant{S}\useconstant{L}$ by Cauchy--Schwarz, any action satisfying
\begin{equation}\label{eq:initialSafeAction}
g(\alpha_1)
\leq
\frac{\tau}{
\useconstant{Cc}\sqrt{2\log\left(\tfrac{1}{\delta}\right)}
+
\useconstant{S}\useconstant{L}
}.
\end{equation}
is safe.
Equivalently, since $g$ is strictly increasing, an initial safe interval is given by
\[
\mathcal{A}^{f}_1(\delta)
=
\left[
0,
g^{-1}
\left(
\min\left\{
g(1),
\frac{\tau}{
\useconstant{Cc}\sqrt{2\log\left(\tfrac{1}{\delta}\right)}
+
\useconstant{S}\useconstant{L}
}
\right\}
\right)
\right].
\]

\section{Proposed Algorithm}
\label{sec:algo_high_prob}

Our algorithm follows the principle of optimism in the face of uncertainty for the reward signal, while using pessimistic estimates for the cost signal.
This asymmetric treatment encourages exploration for reward maximization while maintaining conservative estimates of the safe action set.

At each round, we construct two regularized least-squares estimators, one for $\theta\opt$ and one for $\mu\opt$.
For a regularization parameter $\lambda>0$, define the regularized covariance matrix at round $t$ as
\begin{equation}
\label{eq:reg_covariance_m}
\Sigma_t
=
\lambda I
+
\sum_{\substack{s<t\\ g(\alpha_s)>0}}
X_sX_s^\top .
\end{equation}
Our theoretical analysis holds for any fixed constant $\lambda>0$; in our experiments, we set $\lambda=1$.
Using \Cref{eq:reg_covariance_m}, we define
\begin{equation}
\label{eq:LSE_theta_mu}
\hat{\theta}_t
=
\Sigma_t^{-1}
\sum_{\substack{s<t\\ g(\alpha_s)>0}}
\frac{R_s}{g(\alpha_s)}X_s,
\qquad
\hat{\mu}_t
=
\Sigma_t^{-1}
\sum_{\substack{s<t\\ g(\alpha_s)>0}}
\frac{C_s}{g(\alpha_s)}X_s .
\end{equation}
Since
\[
\frac{R_s}{g(\alpha_s)}
=
\langle X_s,\theta\opt\rangle+\xi_s^r,
\qquad
\frac{C_s}{g(\alpha_s)}
=
\langle X_s,\mu\opt\rangle+\xi_s^c,
\]
the transformed observations follow the standard linear regression model whenever $g(\alpha_s)>0$.
Rounds with $g(\alpha_s)=0$ provide no information about the unknown parameters and are therefore excluded.

To construct a UCB-style algorithm, we define high-probability confidence sets centered at $\hat{\theta}_t$ and $\hat{\mu}_t$.
These sets bound the estimation error with respect to the true parameters $\theta\opt$ and $\mu\opt$ and follow from standard self-normalized concentration inequalities.
Let
\[
N(t)=\left|\{1\le s<t:g(\alpha_s)>0\}\right|
\]
denote the number of informative samples collected before round $t$.

\begin{thm}[Theorem~2 in~\citealp{abbasi2011improved}]
\label{thm:yasin_theorem}
For fixed $\delta'\in(0,1)$, define
\begin{align*}
\beta_t^r(\delta',d)
&=
\useconstant{Cr}
\sqrt{
d\log\left(
\frac{1+N(t)\useconstant{L}^2/\lambda}{\delta'}
\right)
}
+
\sqrt{\lambda}\useconstant{S},
\\
\beta_t^c(\delta',d)
&=
\useconstant{Cc}
\sqrt{
d\log\left(
\frac{1+N(t)\useconstant{L}^2/\lambda}{\delta'}
\right)
}
+
\sqrt{\lambda}\useconstant{S}.
\end{align*}
Then, with probability at least $1-\delta'$,
\[
\|\hat{\theta}_t-\theta\opt\|_{\Sigma_t}
\le
\beta_t^r(\delta',d),
\qquad
\|\hat{\mu}_t-\mu\opt\|_{\Sigma_t}
\le
\beta_t^c(\delta',d).
\]
\end{thm}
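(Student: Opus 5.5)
The plan is to reduce the statement to the standard self-normalized martingale bound of \citet{abbasi2011improved}, applied to the subsequence of informative rounds. Let $s_1<s_2<\dots$ be the random rounds with $g(\alpha_s)>0$. On each such round the rescaled observations satisfy $Y^r_{s}:=R_s/g(\alpha_s)=\langle X_s,\theta\opt\rangle+\xi^r_s$ and $Y^c_s:=C_s/g(\alpha_s)=\langle X_s,\mu\opt\rangle+\xi^c_s$. The heteroscedasticity introduced by $g(\alpha_s)$ therefore disappears after division, and the noise is exactly the original $\xi_s^r$ (resp.\ $\xi_s^c$), which is conditionally $\useconstant{Cr}$- (resp.\ $\useconstant{Cc}$-) sub-Gaussian by \Cref{ass:noise-sub-gaussian}. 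With $V_n=\lambda I+\sum_{k\le n}X_{s_k}X_{s_k}^\top$, we have $\Sigma_t=V_{N(t)}$, and $\hat\theta_t$, $\hat\mu_t$ are exactly the ridge estimators on the first $N(t)$ informative samples.

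\textbf{Step 1: the filtration.} I would index by $k$ and set $\mathcal F_k:=\sigma(\mathcal H_{s_{k+1}-1},X_{s_{k+1}},\alpha_{s_{k+1}})$. The point is that both whether round $s$ is informative and the regressor $X_s$ are determined before the noise $\xi_s$ is drawn. Here I assume, as is implicit in the paper, that $X_t$ and $\alpha_t$ are chosen before the noise at round $t$ is realized, so the sub-Gaussian condition holds conditionally on them. The $s_k$ are then stopping times, $X_{s_k}$ is $\mathcal F_{k-1}$-measurable, and $\xi_{s_k}$ is $\mathcal F_k$-measurable and conditionally sub-Gaussian given $\mathcal F_{k-1}$. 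This is the step needing the most care: one has to verify that sampling at stopping times preserves the conditional sub-Gaussian MGF bound. I would handle it via the supermartingale $M_n^\lambda=\exp(\sum_{k\le n}\lambda^\top X_{s_k}\xi_{s_k}-\tfrac{\useconstant{Cr}^2}{2}\|\lambda\|^2_{\cdot})$. An equivalent route is to define the summand as zero on non-informative rounds, which keeps the original time index and filtration.

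\textbf{Step 2: the standard argument.} Theorem~1 of \citet{abbasi2011improved} (method of mixtures with a Gaussian prior, plus Doob's maximal inequality / stopping-time argument) gives, uniformly over all $n\ge0$ with probability $\ge1-\delta'$,
\[
\Big\|\sum_{k\le n}\xi^r_{s_k}X_{s_k}\Big\|_{V_n^{-1}}\le \useconstant{Cr}\sqrt{2\log\!\big(\det(V_n)^{1/2}\det(\lambda I)^{-1/2}/\delta'\big)}.
\]
Writing $\hat\theta_t-\theta\opt=V_n^{-1}\sum_k\xi_{s_k}X_{s_k}-\lambda V_n^{-1}\theta\opt$ and applying the triangle inequality bounds the bias term by $\sqrt\lambda\|\theta\opt\|_2\le\sqrt\lambda\useconstant{S}$, using \Cref{ass:bounded-reward-cost-param}.

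\textbf{Step 3: the determinant.} By AM--GM on the eigenvalues together with \Cref{ass:bounded-contexts}, $\det V_n\le(\lambda+n\useconstant{L}^2/d)^d$. Hence $\log(\det V_n/\lambda^d)\le d\log(1+n\useconstant{L}^2/\lambda)$, which yields the stated $\beta^r_t(\delta',d)$ with $n=N(t)$, up to the harmless constant absorbed in the paper's form. Repeating the argument verbatim for the cost noise gives $\beta^c_t$.

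To obtain both bounds simultaneously one would either union bound (using $\delta'/2$ each) or read the statement as holding for each separately; I would note this explicitly.
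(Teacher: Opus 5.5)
Your proposal is correct and follows essentially the paper's route. The paper gives no separate argument: it invokes Theorem~2 of \citet{abbasi2011improved} for the rescaled observations $R_s/g(\alpha_s)$, $C_s/g(\alpha_s)$ on the informative rounds. Your explicit treatment of the filtration (noise sub-Gaussian conditionally on $X_t,\alpha_t$, which Assumption~\ref{ass:noise-sub-gaussian}, conditioning only on $\mathcal{H}_{t-1}$, does not literally give) and of the union bound for the joint event are legitimate clarifications the paper glosses over.

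The one point to correct is the end of Step~3. Passing from $\log(\det\Sigma_t/\lambda^d)+2\log(1/\delta')$ to $d\log\bigl((1+N(t)\useconstant{L}^2/\lambda)/\delta'\bigr)$ requires $2\log(1/\delta')\le d\log(1/\delta')$, i.e.\ $d\ge 2$. So for $d=1$ this is not a harmless constant: the displayed radius does not follow and can in fact be violated, a slip inherited from the simplified form of the cited theorem. You should either assume $d\ge 2$ or keep the $2\log(1/\delta')$ term.
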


We use \Cref{thm:yasin_theorem} to define the confidence ellipsoids
\begin{equation}
\label{eq:assymetric-confidence-sets}
\begin{split}
\mathcal{C}_t^r(\delta')
&=
\left\{
\theta\in\mathbb{R}^d:
\|\theta-\hat{\theta}_t\|_{\Sigma_t}
\le
\beta_t^r(\delta',d)
\right\},
\\
\mathcal{C}_t^c(\delta')
&=
\left\{
\mu\in\mathbb{R}^d:
\|\mu-\hat{\mu}_t\|_{\Sigma_t}
\le
\beta_t^c(\delta',d)
\right\}.
\end{split}
\end{equation}
When clear from context, we omit the dependence of $\beta_t^r$ and $\beta_t^c$ on $\delta'$ and $d$.
By a union bound, the events $\theta\opt\in\mathcal{C}_t^r$ and $\mu\opt\in\mathcal{C}_t^c$ hold with probability at least $1-2\delta'$.

We choose an optimistic estimate of the reward parameter by solving
\begin{equation}
\label{eq:theta_hat}
\tilde{\theta}_t
\in
\arg\max_{\theta\in\mathcal{C}_t^r}
\langle X_t,\theta\rangle .
\end{equation}
This is a linear optimization problem over an ellipsoid and admits a closed-form solution from the KKT conditions \citep{boyd2004convex}; see \Cref{sec:KKT_derivation} for details.

For the cost parameter, we take the opposite approach.
We select a pessimistic estimate $\tilde{\mu}_t$ that yields a conservative estimate of the feasible action set.
Since the feasible set is an interval in $[0,1]$, this amounts to choosing the parameter in the confidence set that makes the upper endpoint of the safe interval as small as possible.
This optimistic--pessimistic principle underlies \Cref{alg:HPUCB}.

We have stated the algorithm using the sub-Gaussian parameters
$\useconstant{Cr}$ and $\useconstant{Cc}$.
If these parameters are unknown, they can be replaced by valid upper confidence estimates or upper bounds using standard techniques.
In particular, the confidence intervals of \citep{jun2024noise} can be incorporated without changing the structure of the algorithm.
Moreover, when reward and cost observations are almost surely bounded, a sub-Gaussian upper bound follows directly from standard concentration results \citep{vershynin2018high}.

\begin{algorithm}[H]
\caption{High Probability Constrained UCB}
\label{alg:HPUCB}
\begin{algorithmic}[1]
\Require Constraint threshold $\tau>0$, safety parameter $\delta$, confidence parameter $\delta'$, sub-Gaussianity constants $\useconstant{Cr},\useconstant{Cc}$
\State $\alpha_1 \leftarrow
g^{-1}\!\left(
\min\left\{
1,
\frac{\tau}{
\useconstant{Cc}\sqrt{2\log(1/\delta)}
+
\useconstant{S}\useconstant{L}
}
\right\}
\right)$
\State Select action $\alpha_1$ and store $(X_1,R_1,C_1)$
\For{$t=2,3,\ldots,T$}
    \State Compute $\hat{\theta}_t,\hat{\mu}_t$ according to~\eqref{eq:LSE_theta_mu}.
    \State Use $\hat{\mu}_t$ to solve~\eqref{eq:mu_convex_prog} and compute $\hat{\mathcal{A}}^f_t$ using~\eqref{eq:feasible_set}.
    \State Use $\hat{\theta}_t$ to solve~\eqref{eq:theta_hat} and compute $\tilde{\theta}_t$.
    \State Compute
    \[
    \alpha_t
    \in
    \arg\max_{\alpha\in\hat{\mathcal{A}}^f_t}
    g(\alpha)\langle X_t,\tilde{\theta}_t\rangle .
    \]
    \State Select action $\alpha_t$.
    \If{$g(\alpha_t)>0$}
        \State Store $(X_t,R_t,C_t)$.
    \EndIf
\EndFor
\end{algorithmic}
\end{algorithm}

\subsection{Construction of the Estimated Feasible Set}
\label{sec:choice_mu}

The estimated feasible set $\hat{\mathcal{A}}^f_t$ is constructed from a pessimistic estimate of the cost parameter.
First, we compute the least-squares estimator $\hat{\mu}_t$, which defines a confidence ellipsoid that contains $\mu\opt$ with high probability.
Among all parameters in this ellipsoid, we choose the one that makes the safe action set most conservative.

More precisely, we construct a surrogate feasible set $\hat{\mathcal{A}}^f_t$ such that
\[
\P\!\left(
\hat{\mathcal{A}}^f_t
\subseteq
\mathcal{A}^f_t
\right)
\ge
1-\delta'
\]
for all $t$.
We compute $\hat{\mu}_t$ according to~\eqref{eq:LSE_theta_mu} and solve the following optimization problem:
\begin{equation}
\label{eq:mu_convex_prog}
\begin{aligned}
    & \underset{\mu}{\max}
    && \langle X_t,\mu\rangle \\
    & \text{subject to}
    && \|\mu-\hat{\mu}_t\|_{\Sigma_t}
    \leq
    \beta_t^c, \\
    &&
    & \langle X_t,\mu\rangle
    +
    \useconstant{Cc}\sqrt{2\log\left(\frac{1}{\delta}\right)}
    \geq 0 .
\end{aligned}
\end{equation}
The last constraint ensures that the denominator defining the safe upper endpoint is nonnegative.
Let
\begin{equation}
\label{eq:convexProgramSols}
\KappaM_t
=
\left\{
\mu\in\mathbb{R}^d:
\|\mu-\hat{\mu}_t\|_{\Sigma_t}
\leq
\beta_t^c,
\;
\langle X_t,\mu\rangle
+
\useconstant{Cc}\sqrt{2\log\left(\frac{1}{\delta}\right)}
\geq 0
\right\}.
\end{equation}
If $\KappaM_t\neq\emptyset$, let
\[
\tilde{\mu}_t
\in
\arg\max_{\mu\in\KappaM_t}
\langle X_t,\mu\rangle .
\]
We then define the estimated feasible set as
\begin{equation}
\label{eq:feasible_set}
\hat{\mathcal{A}}^f_t(\delta)
=
\begin{cases}
[0,1],
&
\text{if } \KappaM_t=\emptyset,
\\[0.5em]
\left[
0,
g^{-1}\!\left(
\min\left\{
1,
\dfrac{\tau}{
\langle X_t,\tilde{\mu}_t\rangle
+
\useconstant{Cc}\sqrt{2\log\left(\frac{1}{\delta}\right)}
}
\right\}
\right)
\right],
&
\text{if } \KappaM_t\neq\emptyset.
\end{cases}
\end{equation}

Combining \Cref{thm:yasin_theorem} with the pessimistic construction above yields the following guarantee; its proof is provided in the supplementary material.

\begin{restatable}{lemma}{feasibleSet}
\label{lem:feasibleSet}
For all $t\in[T]$,
\[
\P\left(
\hat{\mathcal{A}}^f_t(\delta)
\subseteq
\mathcal{A}^f_t(\delta)
\mid
\mathcal{H}_t
\right)
\ge
1-\delta' .
\]
\end{restatable}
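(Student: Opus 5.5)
The plan is to work on the confidence event of \Cref{thm:yasin_theorem}. Let $E_t=\{\|\hat{\mu}_t-\mu\opt\|_{\Sigma_t}\le\beta_t^c\}$. By \Cref{thm:yasin_theorem} applied to the cost regression, $E_t$ has probability at least $1-\delta'$, since the transformed observations $C_s/g(\alpha_s)=\langle X_s,\mu\opt\rangle+\xi^c_s$ form a standard linear model with conditionally sub-Gaussian noise. The bound is stated for all $t$ simultaneously, so conditioning on the history $\mathcal{H}_t$ enters only through the predictable choice of $X_t$. I will show deterministically that $E_t$ implies $\hat{\mathcal{A}}^f_t(\delta)\subseteq\mathcal{A}^f_t(\delta)$. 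Throughout, write $b=\useconstant{Cc}\sqrt{2\log(1/\delta)}\ge 0$ and $m=\langle X_t,\mu\opt\rangle$.

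First I describe $\mathcal{A}^f_t(\delta)$ explicitly. Since $g\ge 0$ is strictly increasing with $g(0)=0$, there are two cases. If $m+b\le 0$, then $\mathcal{A}^f_t(\delta)=[0,1]$, because $\tau>0$ by \Cref{ass:positive-threshold}. If $m+b>0$, then
\[
\mathcal{A}^f_t(\delta)=\left[0,\;g^{-1}\!\left(\min\left\{1,\tfrac{\tau}{m+b}\right\}\right)\right].
\]
Both sets, and $\hat{\mathcal{A}}^f_t(\delta)$, are intervals starting at $0$. So the inclusion reduces to comparing right endpoints, and by monotonicity of $g^{-1}$ it suffices to compare the arguments of $g^{-1}$.

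Next I split on the cases in \eqref{eq:feasible_set}.

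\emph{Case $\KappaM_t=\emptyset$.} On $E_t$ we have $\mu\opt$ in the ellipsoid. Emptiness then forces $m+b<0$, so $\mathcal{A}^f_t(\delta)=[0,1]=\hat{\mathcal{A}}^f_t(\delta)$.

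\emph{Case $\KappaM_t\neq\emptyset$, $m+b\ge 0$.} Here $\mu\opt\in\KappaM_t$, so by maximality $\langle X_t,\tilde{\mu}_t\rangle\ge m$. Hence $\langle X_t,\tilde{\mu}_t\rangle+b\ge m+b\ge 0$, which gives $\tau/(\langle X_t,\tilde\mu_t\rangle+b)\le\tau/(m+b)$. Taking the minimum with $1$ and applying $g^{-1}$ gives the endpoint comparison.

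\emph{Case $\KappaM_t\neq\emptyset$, $m+b<0$.} The true feasible set is all of $[0,1]$, so the inclusion is trivial.

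The main obstacle is the degenerate situation where the denominator $\langle X_t,\tilde\mu_t\rangle+b$ equals $0$. There the quotient should be read as $+\infty$, with the minimum equal to $1$; this occurs only when $m+b\le 0$ as well, so it is consistent with the case analysis. I would state this convention explicitly.

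Finally, I combine the pieces. On $E_t$ the inclusion holds, so
\[
\P\bigl(\hat{\mathcal{A}}^f_t\subseteq\mathcal{A}^f_t\mid\mathcal{H}_t\bigr)\ge\P(E_t\mid\cdot)\ge 1-\delta'.
\]
I would note that the probability is really over the randomness that determines $\hat\mu_t$ (the history $\mathcal{H}_{t-1}$ and $X_t$), which is how the conditioning in the statement should be interpreted.
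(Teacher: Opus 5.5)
Your proposal is correct and is essentially the paper's proof: it uses the same confidence event for $\mu^*$ and the same case split on whether the pessimistic program is infeasible and on the sign of $\langle X_t,\mu^*\rangle+b$, and your right-endpoint comparison is equivalent to the paper's pointwise step $g(\alpha)\bigl(\langle X_t,\tilde{\mu}_t\rangle+b\bigr)\le\tau \Rightarrow g(\alpha)\bigl(\langle X_t,\mu^*\rangle+b\bigr)\le\tau$. Your explicit treatment of the zero denominator, and your remark that the bound is really unconditional (the confidence event is determined by $\mathcal{H}_{t-1}$, so its conditional probability given $\mathcal{H}_t$ is $0$ or $1$), are small refinements that the paper glosses over.
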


\section{The ``Good Event''}
\label{sec:goodEvent}

A common analysis technique in multi-armed bandit algorithms is to condition on a high-probability ``good event.''
We define this event as the conjunction of the concentration and feasibility guarantees required by our algorithm.
Specifically, let $\Varepsilon$ denote the intersection of the following sub-events:
\begin{itemize}
    \item $\Varepsilon_{\theta} = \{\theta\opt \in \CC^r_t,\ \forall t \in [T]\}$.
    \item $\Varepsilon_{\mu} = \{\mu\opt \in \CC^c_t,\ \forall t \in [T]\}$.
    \item $\Varepsilon_{A} = \{\EstFeaSet \subseteq \FeaSet,\ \forall t \in [T]\}$.
\end{itemize}
The first two events ensure that the confidence sets for $\theta\opt$ and $\mu\opt$ contain the true parameters, as established in \Cref{thm:yasin_theorem}.
The third event ensures that the estimated feasible action set is contained in the true feasible action set at every round.

\begin{restatable}{lemma}{goodEvent}
\label{lem:goodEvent}
Let $\Varepsilon = \Varepsilon_{\theta} \cap \Varepsilon_{\mu} \cap \Varepsilon_{A}$.
Then,
\begin{equation*}
    \P(\Varepsilon) \ge 1 - 3\delta'.
\end{equation*}
\end{restatable}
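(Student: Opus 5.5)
The plan is to bound the failure probability of each sub-event separately and then apply a union bound:
\[
\P(\Varepsilon^c)\le \P(\Varepsilon_\theta^c)+\P(\Varepsilon_\mu^c)+\P(\Varepsilon_A^c).
\]
For $\Varepsilon_\theta$ and $\Varepsilon_\mu$ I will invoke \Cref{thm:yasin_theorem}. I will read it in its uniform-in-time form: Theorem~2 of \citet{abbasi2011improved} is derived from a self-normalized martingale bound via a stopping-time argument, so the confidence ellipsoid holds simultaneously for all $t\ge 1$ with probability at least $1-\delta'$. It applies here because the transformed observations $R_s/g(\alpha_s)$ and $C_s/g(\alpha_s)$ follow the linear model with conditionally sub-Gaussian noise on the informative rounds. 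The informative rounds are determined by $g(\alpha_s)>0$, and $\alpha_s$ is $\mathcal H_{s-1}\cup\{X_s\}$-measurable, so restricting to them gives a valid predictable subsequence and the martingale structure is preserved. This yields $\P(\Varepsilon_\theta^c)\le\delta'$ and $\P(\Varepsilon_\mu^c)\le\delta'$.

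For $\Varepsilon_A$ I will not use \Cref{lem:feasibleSet} round by round, since that would cost $T\delta'$. Instead I will show that $\Varepsilon_\mu\subseteq\Varepsilon_A$ deterministically. Fix $t$ and suppose $\mu\opt\in\CC^c_t$. Write $b=\useconstant{Cc}\sqrt{2\log(1/\delta)}$ and argue by cases.

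\textbf{Case 1: $\langle X_t,\mu\opt\rangle+b\le 0$.} The feasibility condition in \eqref{eq:setOfFeasibleDosages} then holds for every $\alpha$, because $g\ge 0$ and $\tau>0$. Hence $\FeaSet=[0,1]$ and the inclusion is trivial, whatever $\EstFeaSet$ is.

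\textbf{Case 2: $\langle X_t,\mu\opt\rangle+b>0$.} Then $\mu\opt\in\KappaM_t$, so $\KappaM_t\neq\emptyset$ and $\langle X_t,\tilde\mu_t\rangle\ge\langle X_t,\mu\opt\rangle$. The denominator in \eqref{eq:feasible_set} is therefore at least $\langle X_t,\mu\opt\rangle+b>0$, which gives
\[
\min\{1,\tau/(\langle X_t,\tilde\mu_t\rangle+b)\}\le\min\{1,\tau/(\langle X_t,\mu\opt\rangle+b)\}.
\]
Since $g$ is strictly increasing, $\FeaSet=[0,g^{-1}(\min\{1,\tau/(\langle X_t,\mu\opt\rangle+b)\})]$, and monotonicity of $g^{-1}$ gives $\EstFeaSet\subseteq\FeaSet$.

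For $t=1$, $\EstFeaSet$ is the initial interval from \eqref{eq:initialSafeAction}, which is safe deterministically by Cauchy--Schwarz. Hence $\Varepsilon_\mu\subseteq\Varepsilon_A$, so $\P(\Varepsilon_A^c)\le\P(\Varepsilon_\mu^c)\le\delta'$. Combining the three bounds gives $\P(\Varepsilon)\ge 1-3\delta'$, which is the stated bound; the argument actually yields $1-2\delta'$.

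The main obstacle is the time-uniformity in the first step. \Cref{thm:yasin_theorem} as written in the paper looks per-$t$, and $N(t)$ is random. I will handle this by appealing to the anytime (stopping-time) version in \citet{abbasi2011improved}, which is exactly what the original theorem states. The lemma's $3\delta'$ budget also leaves room for the slightly looser, but still non-$T$-dependent, treatment of $\Varepsilon_A$ via \Cref{lem:feasibleSet}, should one prefer not to use the deterministic inclusion.
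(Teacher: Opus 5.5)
Your proposal is correct. It shares the paper's union-bound skeleton but handles $\Varepsilon_A$ differently.

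The paper bounds each of the three failure probabilities by $\delta'$ and sums them. It cites \Cref{thm:yasin_theorem} for $\Varepsilon_\theta$ and $\Varepsilon_\mu$, and \Cref{lem:feasibleSet} for $\Varepsilon_A$.

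You instead prove the deterministic inclusion $\Varepsilon_\mu\subseteq\Varepsilon_A$, and you use the anytime form of the Abbasi-Yadkori bound explicitly. Your case split on the sign of $\langle X_t,\mu\opt\rangle+b$ is the argument inside the proof of \Cref{lem:feasibleSet}, applied on the time-uniform event.

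Your route buys two things. First, rigor: \Cref{lem:feasibleSet} is stated for each fixed $t$, so read literally it does not give the time-uniform bound $\P(\Varepsilon_A)\ge 1-\delta'$. Getting there requires either a $T\delta'$ union bound or exactly your observation that containment of $\mu\opt$ for all $t$ forces the inclusion for all $t$. Second, a sharper constant: since $\Varepsilon=\Varepsilon_\theta\cap\Varepsilon_\mu$, you get $1-2\delta'$.

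The paper's route is shorter and treats the three sub-events as black boxes, which matches the $\delta'/3$ allocation later used in \Cref{theorem:regret_bound}. Its validity, however, rests implicitly on the same uniform-in-time reasoning you make explicit. Your handling of $t=1$, via the deterministic safety of the initial interval, is another detail the paper leaves unstated.
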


\section{Regret Analysis -- Linear Case}
\label{sec:reg_analysisLinear}

We remind the definition of the $T$-round constrained pseudo-regret as defined in \Cref{eq:reg_def}.
In the linear setting, the expected reward obtained by action $\alpha_t$ is
$g(\alpha_t)\langle X_t,\theta\opt\rangle$.
We therefore define
\begin{equation}
\label{eq:regret_linear}
\mathcal{R}_{\mathcal{C}}(T)
=
\sum_{t=1}^{T}
\left(
r^{*}(X_t)-r_t
\right),
\end{equation}
where
\[
r^{*}(X_t)
=
\max_{\alpha\in\mathcal{A}^{f}_t}
g(\alpha)\langle X_t,\theta\opt\rangle
\]
denotes the optimal feasible reward at round $t$
, and
\[
r_t
=
g(\alpha_t)\langle X_t,\theta\opt\rangle
\]
is the expected reward obtained by the algorithm.

Since $g$ is strictly increasing, the optimal feasible action depends on the sign of
$\langle X_t,\theta\opt\rangle$.
If $\langle X_t,\theta\opt\rangle\ge 0$, the optimal action is the largest feasible action.
If $\langle X_t,\theta\opt\rangle<0$, the optimal action is $\alpha_t=0$.
Let $\alpha_t\opt$ denote an optimal feasible action at round $t$.

Then
\begin{equation}
\label{eq:regret_init}
\begin{aligned}
\mathcal{R}_{\mathcal{C}}(T)
&=
\sum_{t=1}^{T}
\max_{\alpha\in\mathcal{A}^{f}_t}
g(\alpha)\langle X_t,\theta\opt\rangle
-
g(\alpha_t)\langle X_t,\theta\opt\rangle
\\
&=
\sum_{t=1}^{T}
\left(
g(\alpha_t\opt)-g(\alpha_t)
\right)
\langle X_t,\theta\opt\rangle .
\end{aligned}
\end{equation}

We adopt a regret decomposition similar to those commonly used in the literature on constrained linear bandits
\citep{amani2019linear,pacchiano2021stochastic,pacchiano2024contextual}.
Define the auxiliary feasible action
\begin{equation}
\label{eq:alpha_tilde}
\tilde{\alpha}_t
\in
\arg\max_{\alpha\in\mathcal{A}^{f}_t}
g(\alpha)\langle X_t,\tilde{\theta}_t\rangle,
\end{equation}
where $\tilde{\theta}_t$ is the optimistic reward parameter defined in \Cref{eq:theta_hat}.
Using this definition, we decompose the regret as
\begin{equation}
\label{eq:regret_decomposition}
\begin{aligned}
\mathcal{R}_{\mathcal{C}}(T)
&=
\sum_{t=1}^{T}
\left(
g(\alpha_t\opt)-g(\alpha_t)
\right)
\langle X_t,\theta\opt\rangle
\\
&=
\sum_{t=1}^{T}
\underset{\text{Term 1: cost of estimating $\theta\opt$}}
{
\left(
g(\alpha_t\opt)-g(\tilde{\alpha}_t)
\right)
\langle X_t,\theta\opt\rangle
}
\\
&\quad+
\sum_{t=1}^{T}
\underset{\text{Term 2: cost of estimating $\mu\opt$}}
{
\left(
g(\tilde{\alpha}_t)-g(\alpha_t)
\right)
\langle X_t,\theta\opt\rangle
}.
\end{aligned}
\end{equation}

\subsection{Analysis of the Regret}
\label{section::regret_analysis}

\begin{restatable}{lemma}{regFirstTermBound}
\label{lem:reg_first_term_bound}
For all $t\in[T]$, on the event $\Varepsilon_\theta$, the first term in the regret decomposition satisfies
\[
\left(
g(\alpha_t\opt)-g(\tilde{\alpha}_t)
\right)
\langle X_t,\theta\opt\rangle
\le
g(\tilde{\alpha}_t)
\langle X_t,\tilde{\theta}_t-\theta\opt\rangle .
\]
\end{restatable}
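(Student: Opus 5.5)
We plan to use a standard optimism argument: on $\Varepsilon_\theta$ the optimistic value of $\tilde{\alpha}_t$ dominates the true value of $\alpha_t\opt$, and the claimed inequality is exactly this dominance after rearranging.

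\textbf{Step 1: optimism of $\tilde{\theta}_t$.} On $\Varepsilon_\theta$ we have $\theta\opt\in\mathcal{C}^r_t$. Since $\tilde{\theta}_t$ maximizes $\langle X_t,\theta\rangle$ over $\mathcal{C}^r_t$ by \eqref{eq:theta_hat}, it follows that
\[
\langle X_t,\tilde{\theta}_t\rangle\ge\langle X_t,\theta\opt\rangle .
\]

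\textbf{Step 2: optimism of $\tilde{\alpha}_t$.} Both $\alpha_t\opt$ and $\tilde{\alpha}_t$ lie in the same true feasible set $\mathcal{A}^f_t$, and $\tilde{\alpha}_t$ maximizes $g(\alpha)\langle X_t,\tilde{\theta}_t\rangle$ over that set by \eqref{eq:alpha_tilde}. Hence
\[
g(\tilde{\alpha}_t)\langle X_t,\tilde{\theta}_t\rangle\ge g(\alpha_t\opt)\langle X_t,\tilde{\theta}_t\rangle .
\]
We then want $g(\alpha_t\opt)\langle X_t,\tilde{\theta}_t\rangle\ge g(\alpha_t\opt)\langle X_t,\theta\opt\rangle$. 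This follows from Step 1 because $g(\alpha_t\opt)\ge 0$, since $g$ maps into $[0,1]$. Chaining the two inequalities gives
\[
g(\alpha_t\opt)\langle X_t,\theta\opt\rangle\le g(\tilde{\alpha}_t)\langle X_t,\tilde{\theta}_t\rangle .
\]

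\textbf{Step 3: rearrange.} Subtract $g(\tilde{\alpha}_t)\langle X_t,\theta\opt\rangle$ from both sides of the chained inequality. This yields
\[
\bigl(g(\alpha_t\opt)-g(\tilde{\alpha}_t)\bigr)\langle X_t,\theta\opt\rangle\le g(\tilde{\alpha}_t)\langle X_t,\tilde{\theta}_t-\theta\opt\rangle,
\]
which is the claim.

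\textbf{Main obstacle.} The delicate point is the sign handling in Step 2 when $\langle X_t,\theta\opt\rangle<0$, but the argument above does not split on sign, because it only multiplies Step 1 by the nonnegative scalar $g(\alpha_t\opt)$. I would also check that $\mathcal{A}^f_t$ is nonempty and compact so that the maxima defining $\alpha_t\opt$ and $\tilde{\alpha}_t$ exist. It is nonempty because it contains $0$, using $\tau>0$ and $g(0)=0$. It is a closed interval because $g$ is strictly increasing, and for the closedness I would use the continuity of $g$, which the paper implicitly assumes when writing $g^{-1}$ of endpoints. Note that the event $\Varepsilon_A$ is not needed here, since both $\alpha_t\opt$ and $\tilde{\alpha}_t$ are defined on the true feasible set.
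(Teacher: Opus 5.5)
Your proof is correct and is essentially the paper's argument. The paper writes the chain as $g(\alpha_t\opt)\langle X_t,\theta\opt\rangle \le \max_{\alpha\in\mathcal{A}^f_t}\max_{\theta\in\mathcal{C}^r_t} g(\alpha)\langle X_t,\theta\rangle = g(\tilde{\alpha}_t)\langle X_t,\tilde{\theta}_t\rangle$, using $g\ge 0$ and $\theta\opt\in\mathcal{C}^r_t$, then rearranges as you do; this is exactly your Steps 1--2 packaged as a single joint maximization, and, like you, it uses only $\Varepsilon_\theta$.
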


The proof is provided in the supplementary material.
This bound is the analogue of the standard optimism argument for linear bandits
\citep{abbasi2011improved}, applied to the transformed action value $g(\alpha)$.
The main additional step is to control the second term, which captures the loss due to using a conservative estimate of the feasible action set.

\begin{restatable}{lemma}{regSecTermBound}
\label{lem:reg_sec_term_bound}
For all $t\in[T]$, on the event $\Varepsilon_\mu$, the second term in the regret decomposition satisfies
\[
\left(
g(\tilde{\alpha}_t)-g(\alpha_t)
\right)
\langle X_t,\theta\opt\rangle
\le
\useconstant{S}\useconstant{L}
\cdot
\frac{
\langle X_t,\tilde{\mu}_t-\mu\opt\rangle
}{\tau}.
\]
\end{restatable}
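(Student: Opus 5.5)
The plan is to reduce everything to a scalar comparison between the upper endpoints of two nested intervals. Write $b:=\useconstant{Cc}\sqrt{2\log(1/\delta)}$, $D^*_t:=\langle X_t,\mu\opt\rangle+b$ and $\tilde D_t:=\langle X_t,\tilde\mu_t\rangle+b$. Both $\mathcal{A}^f_t(\delta)$ and $\hat{\mathcal{A}}^f_t(\delta)$ are intervals of the form $[0,\cdot]$, and $g$ is strictly increasing. Hence $\tilde\alpha_t$ and $\alpha_t$ are both $0$ when $\langle X_t,\tilde\theta_t\rangle<0$, and both are the respective upper endpoints when $\langle X_t,\tilde\theta_t\rangle\ge 0$. (If $\langle X_t,\tilde\theta_t\rangle=0$ the argmax is not unique; I take the same tie-breaking rule, say the largest maximizer, in both definitions.) In the first case the term vanishes and the claim is trivial, because the right-hand side will be shown to be nonnegative. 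So I only treat the case where both actions are upper endpoints. Round $t=1$, where $\tilde\mu_t$ is not defined, I would treat separately or exclude, as the statement implicitly concerns $t\ge2$.

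The key step is to work on $\Varepsilon_\mu$ and split on the sign of $D^*_t$.

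If $D^*_t\ge 0$, then $\mu\opt$ satisfies both constraints of \eqref{eq:mu_convex_prog}, so $\KappaM_t\neq\emptyset$. By maximality of $\tilde\mu_t$ we get $\tilde D_t\ge D^*_t$, equivalently $\langle X_t,\tilde\mu_t-\mu\opt\rangle\ge0$. The endpoints satisfy $g(\tilde\alpha_t)=m:=\min\{1,\tau/D^*_t\}$ and $g(\alpha_t)=\min\{1,\tau/\tilde D_t\}$.
\begin{itemize}
\item If $\tau/\tilde D_t\ge 1$, the difference is $\le 0$.
\item Otherwise $\tilde D_t>\tau$. Using $mD^*_t\le\tau$,
\[
g(\tilde\alpha_t)-g(\alpha_t)=\frac{m\tilde D_t-\tau}{\tilde D_t}\le\frac{m(\tilde D_t-D^*_t)}{\tilde D_t}\le\frac{\tilde D_t-D^*_t}{\tau}=\frac{\langle X_t,\tilde\mu_t-\mu\opt\rangle}{\tau}.
\]
\end{itemize}

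If $D^*_t<0$, the true feasible set is all of $[0,1]$, so $g(\tilde\alpha_t)=1$.
\begin{itemize}
\item If $\KappaM_t=\emptyset$, the estimated set is also $[0,1]$ and the difference is $0$.
\item Otherwise $\tilde D_t\ge0>D^*_t$, so again $\langle X_t,\tilde\mu_t-\mu\opt\rangle\ge 0$. If $\tilde D_t\le\tau$ the difference is $0$. Otherwise
\[
1-\tau/\tilde D_t=\frac{\tilde D_t-\tau}{\tilde D_t}\le\frac{\tilde D_t-D^*_t}{\tau},
\]
using $D^*_t<0<\tau$ and $\tilde D_t>\tau$.
\end{itemize}

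Finally, multiply by $\langle X_t,\theta\opt\rangle$. When this quantity is nonnegative, Cauchy--Schwarz with \Cref{ass:bounded-reward-cost-param,ass:bounded-contexts} gives $\langle X_t,\theta\opt\rangle\le\useconstant{S}\useconstant{L}$, and combining with the bound on the nonnegative factor $g(\tilde\alpha_t)-g(\alpha_t)$ gives the claim. When it is negative, I use that $g(\tilde\alpha_t)-g(\alpha_t)\ge0$ in every case where both are endpoints, since $\tilde D_t\ge D^*_t$ implies the estimated interval is nested inside the true one. Then the left side is $\le 0\le \useconstant{S}\useconstant{L}\langle X_t,\tilde\mu_t-\mu\opt\rangle/\tau$.

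The main obstacle is the bookkeeping with the $\min\{1,\cdot\}$ truncation and the empty-$\KappaM_t$ branch. The one genuinely useful trick is the inequality $m\tilde D_t-\tau\le m(\tilde D_t-D^*_t)$ together with $\tilde D_t>\tau$ in the nontrivial regime. These are what convert a difference of reciprocals into a linear quantity in $\langle X_t,\tilde\mu_t-\mu\opt\rangle$ scaled by $1/\tau$, without any lower bound on $D^*_t$.
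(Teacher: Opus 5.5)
Your proof is correct and follows the same route as the paper. Both arguments reduce the two actions to upper endpoints of nested intervals, split on the sign of $\langle X_t,\mu\opt\rangle+b$ and on the $\min\{1,\cdot\}$ truncation to get $g(\tilde\alpha_t)-g(\alpha_t)\le \langle X_t,\tilde\mu_t-\mu\opt\rangle/\tau$, and finish with Cauchy--Schwarz; your inequality $m\tilde D_t-\tau\le m(\tilde D_t-D^*_t)$ additionally handles the mixed subcase (true endpoint $1$, estimated endpoint below $1$) that the paper calls ``immediate,'' and your nesting argument covers the case $\langle X_t,\theta\opt\rangle<0$, which the paper's final multiplication leaves implicit.
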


By combining \Cref{lem:reg_first_term_bound} and \Cref{lem:reg_sec_term_bound} with the regret decomposition in \Cref{eq:regret_decomposition}, we obtain the following bound.

\begin{restatable}{thm}{regretBoundLinear}
\label{theorem:regret_bound}
With probability at least $1-\delta'$, the regret of the \textit{High Probability Constrained UCB} algorithm satisfies
\[
\begin{aligned}
\mathcal{R}_{\mathcal{C}}(T)
=
\CO\!\Bigl(
&
\Bigl(1+\tfrac{\useconstant{S}\useconstant{L}}{\tau}\Bigr)
\cdot
\max\{\beta^r_T(\delta',d),\beta^c_T(\delta',d)\}
\\
&\times
\sqrt{
dT\log\!\left(
1+\tfrac{T\useconstant{L}^2}{d}
\right)
}
\Bigr).
\end{aligned}
\]
\end{restatable}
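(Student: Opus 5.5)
We will work on the good event $\Varepsilon$ of \Cref{lem:goodEvent} and start from the regret decomposition \eqref{eq:regret_decomposition}. By \Cref{lem:reg_first_term_bound} and \Cref{lem:reg_sec_term_bound}, each round's regret is at most
\[
g(\tilde{\alpha}_t)\langle X_t,\tilde{\theta}_t-\theta\opt\rangle
+\frac{\useconstant{S}\useconstant{L}}{\tau}\langle X_t,\tilde{\mu}_t-\mu\opt\rangle .
\]
We then bound both inner products by Cauchy--Schwarz in the $\Sigma_t$-geometry. Since $\tilde{\theta}_t,\theta\opt\in\mathcal{C}^r_t$, we have $\langle X_t,\tilde{\theta}_t-\theta\opt\rangle\le 2\beta^r_t\|X_t\|_{\Sigma_t^{-1}}$. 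Likewise, $\tilde{\mu}_t\in\KappaM_t\subseteq\mathcal{C}^c_t$ and $\mu\opt\in\mathcal{C}^c_t$, so $\langle X_t,\tilde{\mu}_t-\mu\opt\rangle\le 2\beta^c_t\|X_t\|_{\Sigma_t^{-1}}$. Using $g\le 1$ and the monotonicity of $\beta_t$ in $N(t)\le T$, the per-round regret is at most $2(1+\useconstant{S}\useconstant{L}/\tau)\max\{\beta^r_T,\beta^c_T\}\,\|X_t\|_{\Sigma_t^{-1}}$. We also need a trivial per-round cap. Since $|g(\alpha_t\opt)-g(\alpha_t)|\le1$, each round's regret is at most $\useconstant{S}\useconstant{L}$, so we may replace $\|X_t\|_{\Sigma_t^{-1}}$ by $\min\{1,\|X_t\|_{\Sigma_t^{-1}}\}$ at the cost of a constant depending on $\useconstant{S}\useconstant{L}$ and $\beta$, which the $\CO$ absorbs. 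Note that $\beta\ge\sqrt\lambda\useconstant{S}$.

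The main obstacle is that $\Sigma_t$ is updated only on informative rounds, i.e.\ those with $g(\alpha_s)>0$. The elliptical potential lemma therefore applies only to those rounds. Rounds with $\alpha_t=0$ need separate treatment, and we handle them by showing they contribute no regret on $\Varepsilon$. There are two ways $\alpha_t=0$ can arise.

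First, suppose $\langle X_t,\tilde\theta_t\rangle<0$. Optimism gives $\langle X_t,\theta\opt\rangle\le\langle X_t,\tilde\theta_t\rangle<0$, so $\alpha_t\opt=0$ and the regret is zero.

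Second, suppose $\langle X_t,\tilde\theta_t\rangle\ge 0$ but the estimated upper endpoint is $0$. This cannot happen, because $\tau>0$ forces the endpoint of \eqref{eq:feasible_set} to satisfy $g^{-1}(\min\{1,\tau/(\cdot)\})>0$. Ties at $\langle X_t,\tilde\theta_t\rangle=0$ can be broken toward the largest action.

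Hence, apart from rounds with zero regret, every round is informative. Summing over the informative rounds, which are exactly the rounds that update $\Sigma_t$, we apply Cauchy--Schwarz and the elliptical potential lemma of \citet{abbasi2011improved}:
\[
\sum_{t}\min\{1,\|X_t\|^2_{\Sigma_t^{-1}}\}\le 2d\log\!\left(1+\frac{T\useconstant{L}^2}{d\lambda}\right).
\]
This gives $\sum_t\min\{1,\|X_t\|_{\Sigma_t^{-1}}\}\le\sqrt{2dT\log(1+T\useconstant{L}^2/(d\lambda))}$, which has the claimed form with $\lambda$ treated as a constant.

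Combining the three steps yields the stated bound on $\Varepsilon$. Finally, we reconcile the probability. \Cref{lem:goodEvent} gives $1-3\delta'$. To obtain the stated level, we rescale $\delta'\to\delta'/3$, which changes $\beta$ only inside the logarithm and so leaves the $\CO$ unchanged.
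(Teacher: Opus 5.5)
Your proposal is correct and follows the paper's proof. It uses the same decomposition bounded via \Cref{lem:reg_first_term_bound} and \Cref{lem:reg_sec_term_bound}, Cauchy--Schwarz in the $\Sigma_t$-norm with the $2\beta$ bounds, the elliptic potential lemma, and the rescaling $\delta'\to\delta'/3$.

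Your two additions make rigorous two steps the paper passes over. The paper's potential lemma sums $\|X_t\|^2_{\Sigma_t^{-1}}$ over all $t\in[T]$ without truncation, and it only remarks that at most $T$ rounds are informative.

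\begin{itemize}
\item You show that rounds with $\alpha_t=0$ incur zero regret on the good event, so the potential lemma is applied only along the subsequence that actually updates $\Sigma_t$. No tie-breaking rule is needed for this: when $\alpha_t=0$ is chosen, optimism already gives $\langle X_t,\theta\opt\rangle\le\langle X_t,\tilde\theta_t\rangle\le 0$.
\item You add the $\min\{1,\cdot\}$ truncation, at the price of a constant factor of order $\max\{1,\useconstant{L}/\sqrt{\lambda}\}$.
\end{itemize}
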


All proofs are deferred to the supplementary material.
Notably, the bound does not depend on $\delta$, since the benchmark is the clairvoyant policy that knows $\theta\opt$ and $\mu\opt$ but must satisfy the same realized-cost constraint as the learner.

\section{Non-linear Rewards and Costs}
\label{sec:non-linear}

We now extend the analysis beyond linear reward and cost models.
Instead of assuming that the reward and cost functions are linear in unknown parameters, we consider general function classes and express the regret bound in terms of the \textit{eluder dimension} \citep{russo2013eluder}.

At each round $t$, the learner observes a context $X_t$ and selects an action $\alpha_t\in[0,1]$.
The reward and cost signals are given by
\[
R_t
=
g(\alpha_t)\left(\theta\opt(X_t)+\xi_t^r\right),
\qquad
C_t
=
g(\alpha_t)\left(\mu\opt(X_t)+\xi_t^c\right),
\]
where $\theta\opt\in\CF^r$ and $\mu\opt\in\CF^c$ are the unknown mean reward and cost functions, respectively.
The function classes $\CF^r$ and $\CF^c$ are known to the learner.
We assume that all functions in these classes are bounded in $[-1,1]$.
This relaxes the common normalization to $[0,1]$; the important property for the analysis is boundedness.
As in the linear case, the noise variables $\xi_t^r$ and $\xi_t^c$ are conditionally sub-Gaussian.

The realized-cost feasible action set at round $t$ is
\begin{equation}
\label{eq:nonlinear_feasible_set_true}
\mathcal{A}_t^f(X_t)
=
\left\{
\alpha\in[0,1]:
g(\alpha)
\left(
\mu\opt(X_t)
+
\useconstant{Cc}\sqrt{2\log\left(\frac{1}{\delta}\right)}
\right)
\le
\tau
\right\}.
\end{equation}
The agent selects actions from an estimated subset of this feasible set.

For a subset $\widetilde{\CF}\subseteq\CF$, we define its width at a context $X$ as
\begin{equation}
\label{eq:confWidths}
w_{\widetilde{\CF}}(X)
=
\sup_{\underline{f},\overline{f}\in\widetilde{\CF}}
\left(
\overline{f}(X)-\underline{f}(X)
\right).
\end{equation}
This quantity measures the uncertainty of the function class at the current context and plays the same role as confidence widths in the linear analysis.

The $T$-round constrained pseudo-regret is
\begin{equation}
\label{eq:nonlinear_regret}
\mathcal{R}(T)
=
\sum_{t=1}^{T}
\left[
g(\alpha_t\opt)\theta\opt(X_t)
-
g(\alpha_t)\theta\opt(X_t)
\right],
\end{equation}
where
\[
\alpha_t\opt
\in
\arg\max_{\alpha\in\mathcal{A}_t^f(X_t)}
g(\alpha)\theta\opt(X_t)
\]
is the optimal feasible action at round $t$.

We define the dataset available at the beginning of round $t$ as
\[
\mathcal{D}_t
=
\left\{
\left(X_s,\frac{R_s}{g(\alpha_s)},\frac{C_s}{g(\alpha_s)}\right)
:
s<t,\ g(\alpha_s)>0
\right\}.
\]
Thus, whenever $g(\alpha_s)>0$,
\[
\frac{R_s}{g(\alpha_s)}
=
\theta\opt(X_s)+\xi_s^r,
\qquad
\frac{C_s}{g(\alpha_s)}
=
\mu\opt(X_s)+\xi_s^c.
\]
Rounds with $g(\alpha_s)=0$ provide no information about the unknown functions and are excluded.
For any function $f$, let
\[
\norm{f}_{\mathcal{D}_t}
=
\sqrt{
\sum_{(X_s,\cdot,\cdot)\in\mathcal{D}_t}
f^2(X_s)
}
\]
denote the empirical norm induced by the observed contexts.

At each round, we construct least-squares estimates $\hat{\theta}_t$ and $\hat{\mu}_t$ over the classes $\CF^r$ and $\CF^c$ using the transformed observations in $\mathcal{D}_t$.
We then define the confidence sets
\begin{align}
\label{eq:nonlinear_conf_sets}
\mathcal{C}_t^r(\delta')
&=
\left\{
\theta\in\CF^r:
\norm{\theta-\hat{\theta}_t}_{\mathcal{D}_t}^2
\le
\rho^r(t,\delta')
\right\},
\\
\mathcal{C}_t^c(\delta')
&=
\left\{
\mu\in\CF^c:
\norm{\mu-\hat{\mu}_t}_{\mathcal{D}_t}^2
\le
\rho^c(t,\delta')
\right\}.
\end{align}
For finite function classes, one may take
\[
\rho^r(t,\delta')
=
8\useconstant{Cr}^2
\log\left(\frac{|\CF^r|}{\delta'}\right),
\qquad
\rho^c(t,\delta')
=
8\useconstant{Cc}^2
\log\left(\frac{|\CF^c|}{\delta'}\right).
\]
More generally, the regret bound below is stated in terms of the corresponding confidence radii and eluder dimensions.

To construct the estimated feasible action set, we choose the most conservative cost function in the confidence set at the current context.
Specifically, we solve
\begin{equation}
\label{eq:mu_convex_prog_eluder}
\begin{aligned}
    & \underset{\mu\in\CF^c}{\max}
    &&  \mu(X_t) \\
    & \text{subject to}
    &&
    \mu\in\mathcal{C}_t^c(\delta'),
    \\
    &&
    &
    \mu(X_t)
    +
    \useconstant{Cc}\sqrt{2\log\left(\frac{1}{\delta}\right)}
    \ge
    0 .
\end{aligned}
\end{equation}
If this optimization problem has no feasible solution, we set
\[
\hat{\mathcal{A}}_t^f=[0,1].
\]
Otherwise, let $\tilde{\mu}_t$ denote a solution of \eqref{eq:mu_convex_prog_eluder}.
The estimated feasible set is then
\begin{equation}
\label{eq:nonlinear_estimated_feasible_set}
\hat{\mathcal{A}}_t^f
=
\left[
0,
g^{-1}
\left(
\min\left\{
1,
\frac{\tau}{
\tilde{\mu}_t(X_t)
+
\useconstant{Cc}\sqrt{2\log\left(\frac{1}{\delta}\right)}
}
\right\}
\right)
\right].
\end{equation}

For the reward function, we use optimism and select
\[
\tilde{\theta}_t(X_t)
=
\max_{\theta\in\mathcal{C}_t^r(\delta')}
\theta(X_t).
\]
The action is then chosen as
\[
\alpha_t
\in
\arg\max_{\alpha\in\hat{\mathcal{A}}_t^f}
g(\alpha)\tilde{\theta}_t(X_t).
\]
Since $g$ is strictly increasing, this optimization reduces to selecting the largest action in $\hat{\mathcal{A}}_t^f$ whenever $\tilde{\theta}_t(X_t)\ge0$, and selecting $\alpha_t=0$ otherwise.

\begin{algorithm}[H]
\caption{Non-Linear High Probability Constrained UCB}
\label{alg:non-linear}
\begin{algorithmic}[1]
\Require Constraint threshold $\tau>0$, safety parameter $\delta$, confidence parameter $\delta'$, sub-Gaussianity constants $\useconstant{Cr},\useconstant{Cc}$
\State $\alpha_1 \leftarrow
g^{-1}\!\left(
\min\left\{
1,
\frac{\tau}{
\useconstant{Cc}\sqrt{2\log(1/\delta)}+1
}
\right\}
\right)$
\State Select action $\alpha_1$ and store $(X_1,R_1,C_1)$
\For{$t=2,3,\ldots,T$}
    \State Compute $\hat{\theta}_t,\hat{\mu}_t$ using least-squares estimators over $\CF^r$ and $\CF^c$.
    \State Construct $\mathcal{C}_t^r(\delta')$ and $\mathcal{C}_t^c(\delta')$ using~\eqref{eq:nonlinear_conf_sets}.
    \State Compute $\hat{\mathcal{A}}_t^f$ using~\eqref{eq:mu_convex_prog_eluder} and~\eqref{eq:nonlinear_estimated_feasible_set}.
    \State Compute $\tilde{\theta}_t(X_t)=\max_{\theta\in\mathcal{C}_t^r(\delta')}\theta(X_t)$.
    \State Compute
    \[
    \alpha_t
    \in
    \arg\max_{\alpha\in\hat{\mathcal{A}}_t^f}
    g(\alpha)\tilde{\theta}_t(X_t).
    \]
    \State Select action $\alpha_t$.
    \If{$g(\alpha_t)>0$}
        \State Store $(X_t,R_t,C_t)$.
    \EndIf
\EndFor
\end{algorithmic}
\end{algorithm}

Let $d^r_{\mathrm{eluder}}$ and $d^c_{\mathrm{eluder}}$ denote the eluder dimensions of the reward and cost function classes, respectively.
The following theorem gives the regret guarantee for the non-linear setting.

\begin{restatable}{thm}{eluderRegBound}
\label{thm:eluderRegBound}
With probability at least $1-\delta'$, the regret of the \textit{Non-Linear High Probability Constrained UCB} algorithm satisfies
\begin{align*}
\mathcal{R}(T)
=
\mathcal{O}\Big(
&
\sqrt{
d^r_{\mathrm{eluder}}T\rho^r(T,\delta'/3)
}
\\
&+
\frac{1}{\tau}
\sqrt{
d^c_{\mathrm{eluder}}T\rho^c(T,\delta'/3)
}
\Big).
\end{align*}
\end{restatable}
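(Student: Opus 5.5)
The proof follows the linear analysis of \Cref{theorem:regret_bound}, with the elliptical-potential argument replaced by the eluder-dimension width-sum bound of \citet{russo2013eluder}.

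\textbf{Good event.} First I would define the non-linear analogue of the good event $\Varepsilon$. Each of the three sub-events is allotted probability $\delta'/3$:
\begin{itemize}
\item $\theta\opt\in\mathcal{C}^r_t(\delta'/3)$ for all $t$;
\item $\mu\opt\in\mathcal{C}^c_t(\delta'/3)$ for all $t$;
\item the estimated feasible set is contained in the true one.
\end{itemize}
This allocation is why the radii $\rho^r(T,\delta'/3)$ and $\rho^c(T,\delta'/3)$ appear in the bound. The first two events follow from the least-squares confidence-set result of \citet{russo2013eluder} applied to the transformed observations $R_s/g(\alpha_s)$ and $C_s/g(\alpha_s)$. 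These obey a standard regression model with conditionally sub-Gaussian noise, exactly as in the linear case. The inclusion $\hat{\mathcal{A}}^f_t\subseteq\mathcal{A}^f_t(X_t)$ then follows as in \Cref{lem:feasibleSet}. On the cost event, $\mu\opt$ is feasible for \eqref{eq:mu_convex_prog_eluder}, because $\mu\opt(X_t)+\useconstant{Cc}\sqrt{2\log(1/\delta)}$ must be nonnegative whenever the constraint is active. Therefore $\tilde\mu_t(X_t)\ge\mu\opt(X_t)$, which shrinks the upper endpoint.

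\textbf{Regret decomposition.} On this event I would introduce $\tilde\alpha_t\in\arg\max_{\alpha\in\mathcal{A}^f_t}g(\alpha)\tilde\theta_t(X_t)$ and split the regret into the two terms of \eqref{eq:regret_decomposition}.

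For Term 1, the argument of \Cref{lem:reg_first_term_bound} carries over verbatim, since it only uses optimism $\tilde\theta_t(X_t)\ge\theta\opt(X_t)$ and $g\in[0,1]$. This gives a per-round bound
\[
g(\tilde\alpha_t)\bigl(\tilde\theta_t(X_t)-\theta\opt(X_t)\bigr)\le w_{\mathcal{C}^r_t}(X_t).
\]

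For Term 2, I would follow \Cref{lem:reg_sec_term_bound}. The two endpoints are $g^{-1}$ of $\tau/(a+\kappa)$ and $\tau/(b+\kappa)$, where $a=\mu\opt(X_t)$, $b=\tilde\mu_t(X_t)$ and $\kappa=\useconstant{Cc}\sqrt{2\log(1/\delta)}$. When both are below $1$, the gap is
\[
\frac{\tau}{a+\kappa}-\frac{\tau}{b+\kappa}=\frac{\tau(b-a)}{(a+\kappa)(b+\kappa)}.
\]
Each denominator factor is at least $\tau$ whenever the cap at $1$ is not active, which bounds the gap by $(b-a)/\tau$. The cases where the cap is active only decrease the difference. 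Using $|\theta\opt(X_t)|\le1$, this yields the per-round bound
\[
\frac{\tilde\mu_t(X_t)-\mu\opt(X_t)}{\tau}\le\frac{w_{\mathcal{C}^c_t}(X_t)}{\tau}.
\]

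\textbf{Summing the widths.} This is the main obstacle. The eluder width-sum lemma of \citet{russo2013eluder} gives
\[
\sum_t\min\{1,w_{\mathcal{C}_t}(X_t)\}=\mathcal{O}\Bigl(\sqrt{d_{\mathrm{eluder}}T\rho(T,\delta'/3)}+d_{\mathrm{eluder}}\Bigr),
\]
but it assumes every round's context enters the dataset. Here rounds with $g(\alpha_t)=0$ are dropped, so $\mathcal{D}_t$ and the confidence sets are not updated on them.

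My plan is to sum only over rounds with $g(\alpha_t)>0$ and argue that the other rounds contribute no regret, or are handled through $g(\tilde\alpha_t)$. On Term 1, $\alpha_t=0$ means $\tilde\theta_t(X_t)<0$. Then $\theta\opt(X_t)<0$ on the good event, so $\alpha\opt_t=0$ and Term 1 is zero.

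Term 2 is subtler. When $\alpha_t=0$ because $\tilde\theta_t(X_t)<0$, Term 2 is nonpositive. The remaining case is an estimated feasible set collapsing to $\{0\}$, which cannot happen since $\tau>0$ gives a positive endpoint. So the informative rounds form a subsequence on which the eluder argument applies directly. Widths are also bounded by $2$, since the classes map into $[-1,1]$, so the truncation at $1$ costs only a constant factor.

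Summing both terms and absorbing the additive $d_{\mathrm{eluder}}$ terms (dominated for $T\ge d_{\mathrm{eluder}}$) gives the stated bound.
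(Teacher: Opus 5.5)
Your proposal is correct and follows the paper's proof: the same good event, the same decomposition through $\tilde\alpha_t$, the per-round bounds $w_{\mathcal{C}^r_t}(X_t)$ and $w_{\mathcal{C}^c_t}(X_t)/\tau$ for the two terms, and an eluder width-sum lemma to finish. Your handling of the uninformative rounds is more careful than the paper's: the paper applies \Cref{lem:eluderLemma} to the sum over all $t\in[T]$, even though the confidence sets are updated only on rounds with $g(\alpha_t)>0$. Your observation closes that gap: a round with $\alpha_t=0$ incurs zero regret on the good event, since then $\theta^*(X_t)\le\tilde\theta_t(X_t)\le 0$, so the width-sum lemma only needs to be applied along the informative subsequence.
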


\section{Experimental Results}
\label{sec:experiments}

We evaluate the performance of \Cref{alg:HPUCB} on both synthetic and real-world datasets.
A natural application of our algorithm is dosage selection, where the action corresponds to the administered treatment level.
However, access to large-scale medical datasets with rich patient context is limited, and we leave a thorough empirical study in clinical settings for future work.
Instead, we assess the performance of our method on the \href{https://data.nasa.gov/dataset/li-ion-battery-aging-datasets}{NASA Li-ion Battery Aging Datasets}.
In this setting, we interpret the current level as the dosage, the voltage change as the reward, and the temperature as the cost.

We first report in \Cref{tab:hpucb_onexpectation_tau} the violation ratio of \texttt{HPUCB} and compare it with a baseline algorithm that enforces the cost constraint only in expectation, as in \citep{pacchiano2024contextual}.
We observe that \texttt{HPUCB} achieves a violation ratio of $0.02\%$, whereas the expected-cost baseline exhibits violation rates between $12\%$ and $23\%$.
Such violation rates are undesirable in safety-critical applications, where individual unsafe outcomes may be unacceptable.

We begin with synthetic experiments.
We generate $\theta^*$ and $\mu^*$ by sampling from a $d$-dimensional standard normal distribution and normalizing them to unit norm.
Contexts $X_t$ are generated similarly from a multivariate normal distribution and normalized.
We consider dimensions $d \in \{5,10\}$ and multiple threshold values $\tau$, and run each configuration for $10^4$ rounds over 5 random seeds.
In the linear case, we use the closed-form solutions derived for \Cref{eq:theta_hat,eq:mu_convex_prog}, avoiding the need to solve convex programs at each round.
The total running time across all seeds is below two minutes for all thresholds.
All experiments were run on a MacBook Air with an Apple M3 chip and 24~GB RAM using the JAX framework.
The dominant computational cost comes from Cholesky decompositions, which scale as $\CO(d^3)$.
JAX substantially accelerates these linear algebra operations, especially when executed on a GPU.
This becomes particularly relevant in the real-world experiment, where the feature dimension is 2048.

\begin{table}[ht]
\centering
\caption{Comparison of HP-UCB and On-Expectation across different values of $\tau$, with 95\% confidence intervals.}
\label{tab:hpucb_onexpectation_tau}
\begin{tabular}{ccc}
\hline
$\tau$ & HP-UCB (95\% CI) & On-Expectation (95\% CI) \\
\hline
0.1 & 0.0002, [0.0001, 0.0002] & 0.2392, [0.2354, 0.2430] \\
0.2 & 0.0002, [0.0001, 0.0002] & 0.2313, [0.2272, 0.2355] \\
0.3 & 0.0002, [0.0001, 0.0002] & 0.2224, [0.2179, 0.2269] \\
0.4 & 0.0002, [0.0001, 0.0002] & 0.2126, [0.2077, 0.2176] \\
0.5 & 0.0002, [0.0001, 0.0002] & 0.2016, [0.1964, 0.2067] \\
0.6 & 0.0002, [0.0001, 0.0002] & 0.1890, [0.1834, 0.1946] \\
0.7 & 0.0002, [0.0001, 0.0002] & 0.1763, [0.1705, 0.1821] \\
0.8 & 0.0002, [0.0001, 0.0002] & 0.1621, [0.1563, 0.1678] \\
0.9 & 0.0002, [0.0001, 0.0002] & 0.1472, [0.1419, 0.1525] \\
1.0 & 0.0002, [0.0001, 0.0002] & 0.1324, [0.1274, 0.1374] \\
\hline
\end{tabular}
\end{table}

In \Cref{fig:synthetic}, we report representative learning curves for $d \in \{5,10\}$ with threshold $\tau=0.5$.
We compare our algorithm with a variant of $\varepsilon$-Greedy that plays the safe action $\alpha_1$, as defined in \Cref{eq:initialSafeAction}, with probability $\varepsilon$, and otherwise follows the policy of \Cref{alg:HPUCB}.
The main advantage of this variant is its lower computational cost, especially in higher dimensions, while still collecting approximately $\varepsilon T$ informative samples that help tighten its confidence intervals.
Empirically, we find that $\varepsilon=0.5$ provides a favorable trade-off between computational efficiency and statistical performance, achieving comparable regret while avoiding dependence on $T$ in the parameter tuning.

A direct comparison with the method of \cite{pacchiano2024contextual} is not entirely appropriate, as their algorithm enforces the cost constraint only in expectation, whereas our setting requires high-probability constraint satisfaction for the realized cost.
As shown in \Cref{tab:hpucb_onexpectation_tau}, this difference leads to substantially higher violation rates for the expected-cost baseline.

\begin{figure}[ht]
    \centering
    \begin{subfigure}{0.48\columnwidth}
        \centering
        \includegraphics[width=\linewidth]{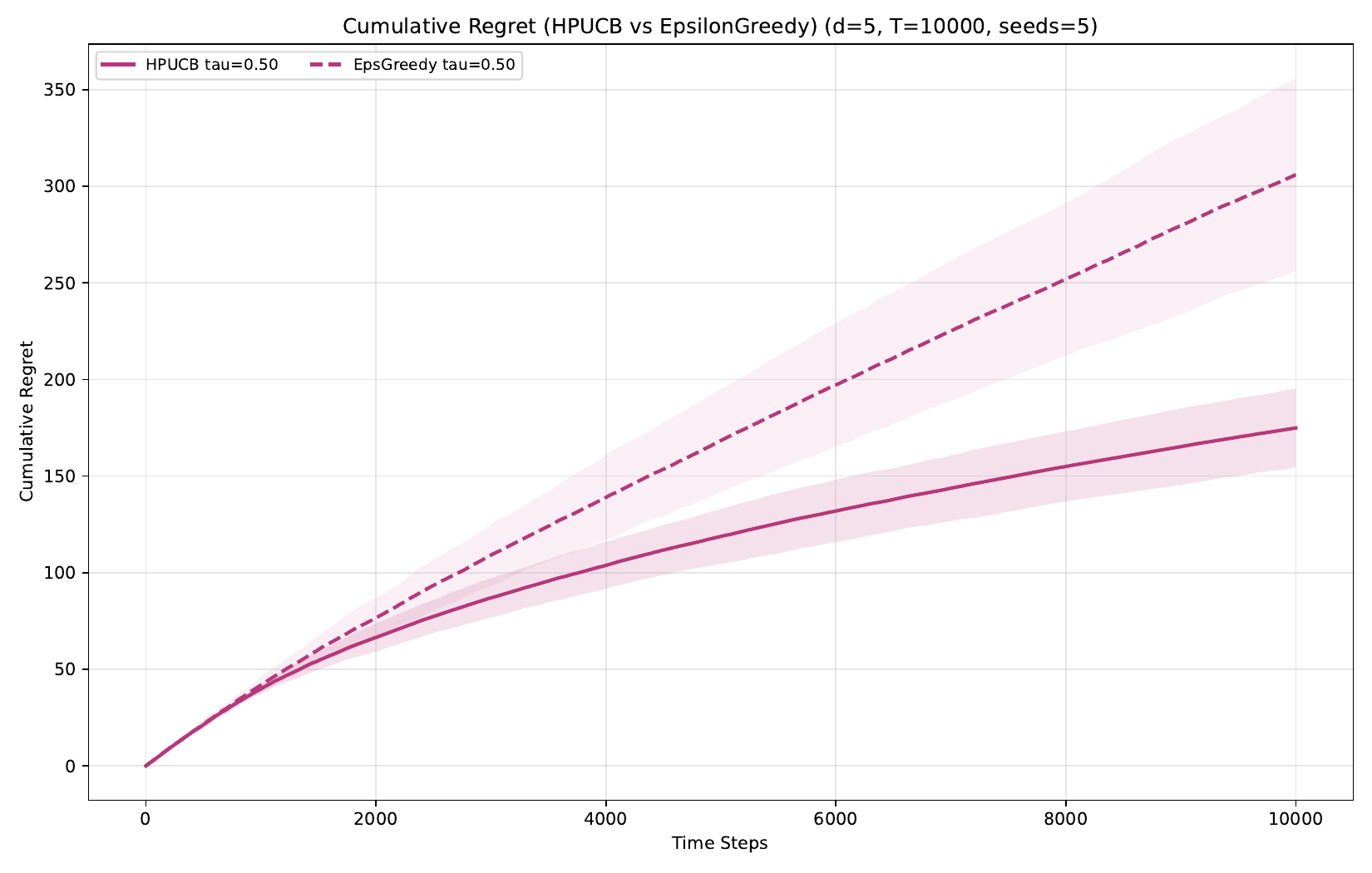}
        \caption{$d=5$, $\tau=0.5$}
        \label{fig:syntheticd5}
    \end{subfigure}
    \hfill
    \begin{subfigure}{0.48\columnwidth}
        \centering
        \includegraphics[width=\linewidth]{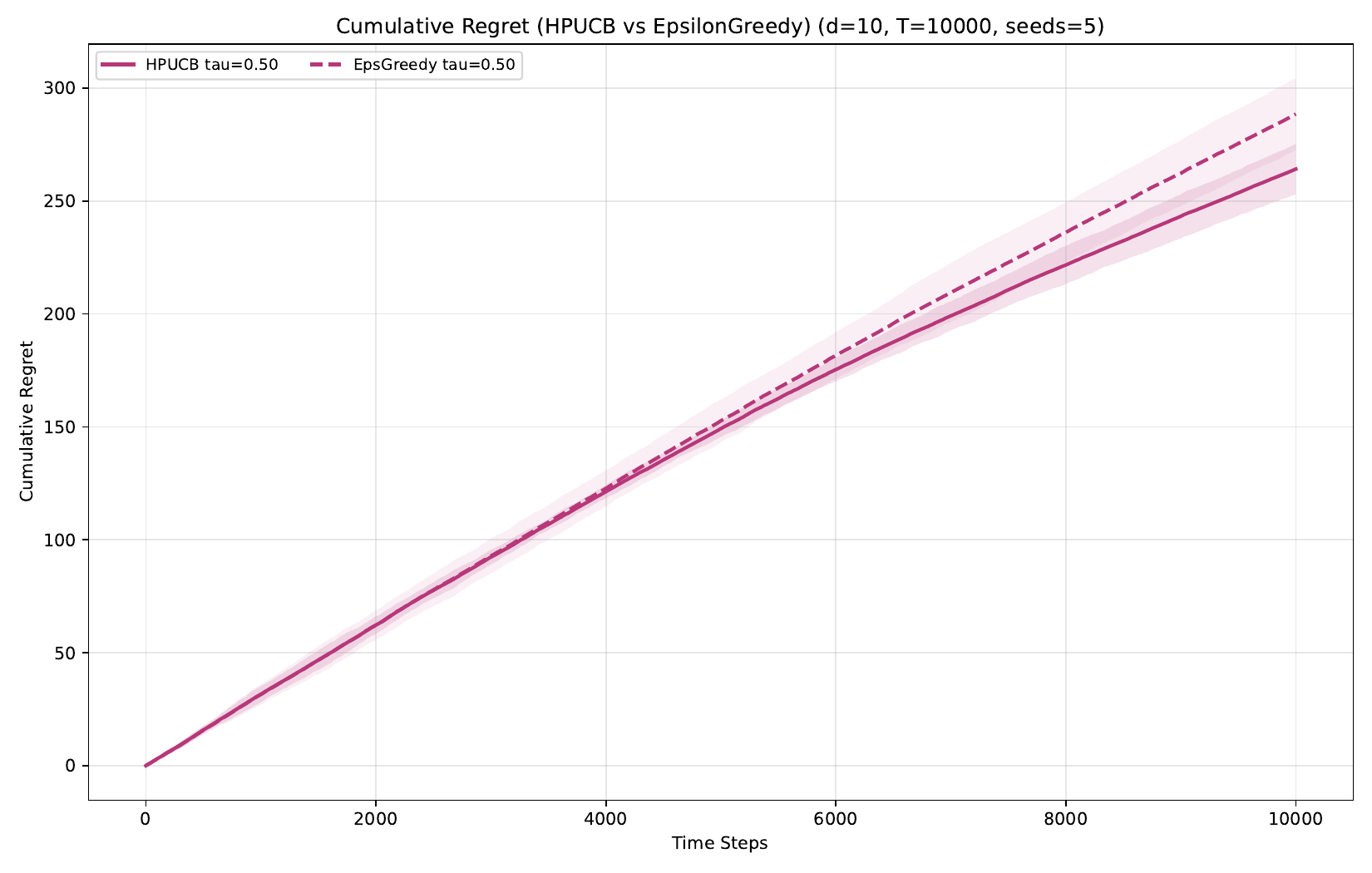}
        \caption{$d=10$, $\tau=0.5$}
        \label{fig:syntheticd10}
    \end{subfigure}
    \caption{Synthetic experiments comparing \texttt{HPUCB} and $\varepsilon$-Greedy for $\tau=0.5$.}
    \label{fig:synthetic}
\end{figure}

We now describe the experimental setup for the \href{https://data.nasa.gov/dataset/li-ion-battery-aging-datasets}{NASA Li-ion Battery Aging Datasets}.
The dataset contains measurements from four types of batteries (RW9, RW10, RW11, RW12) operating under both charging and discharging modes, across a range of current levels.
For each cycle, the dataset records the voltage change, temperature, textual metadata describing the operating conditions, and a time series corresponding to the duration of the cycle.

To construct contextual features, we use embeddings generated by the language model Gemma \citep{team2024gemma}.
At each round, the input to the language model consists of all available information from the previous five time steps, formatted as a JSON object.
The resulting embedding vector has dimension 2048 and serves as the context $X_t$.

To satisfy the realizability assumption and obtain ground-truth parameters $\theta^*$ and $\mu^*$, we split the dataset into four parts and use approximately half of the data, about 100k samples with dimension $2048+3$, to perform a least-squares fit.
The fitted parameters are then treated as the true underlying model, and we evaluate the performance of \Cref{alg:HPUCB} on the remaining portion of the dataset.

Due to the high dimensionality of the problem, all real-world experiments were conducted on a compute cluster equipped with an NVIDIA L40 GPU.
The total running time was approximately 30 minutes for \texttt{HPUCB} and about two minutes for the baseline algorithm, which has access to the true reward and cost parameters.
We report the cumulative regret of \texttt{HPUCB} and $\varepsilon$-Greedy for threshold values $\tau \in \{0.1,0.2\}$.

\begin{figure}[ht]
    \centering
    \begin{subfigure}{0.48\columnwidth}
        \centering
        \includegraphics[width=\linewidth]{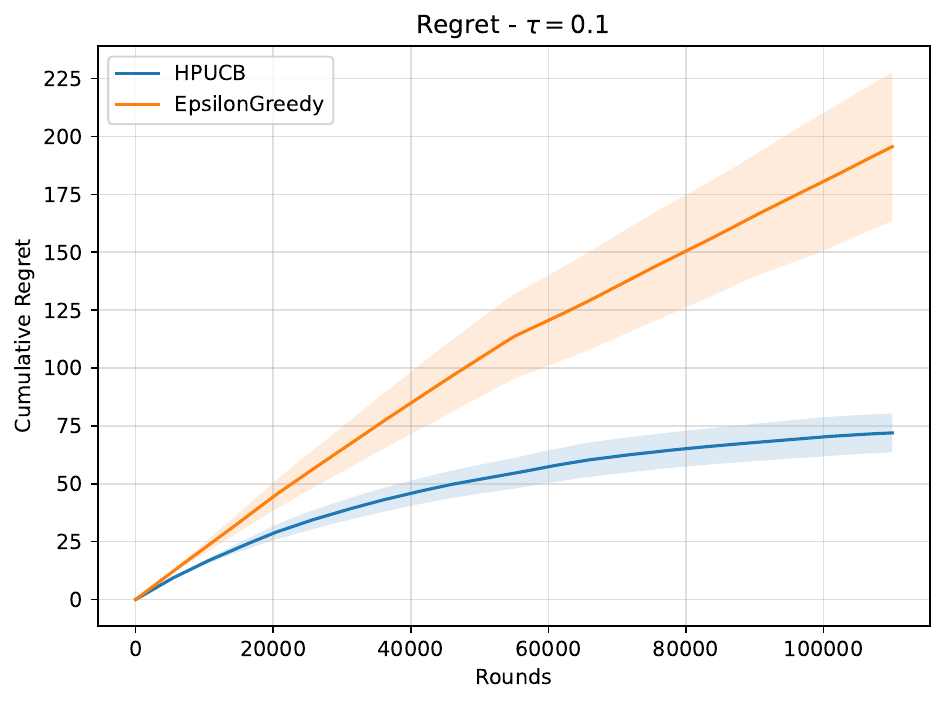}
        \caption{$\tau=0.1$}
        \label{fig:nasaPoint1}
    \end{subfigure}
    \hfill
    \begin{subfigure}{0.48\columnwidth}
        \centering
        \includegraphics[width=\linewidth]{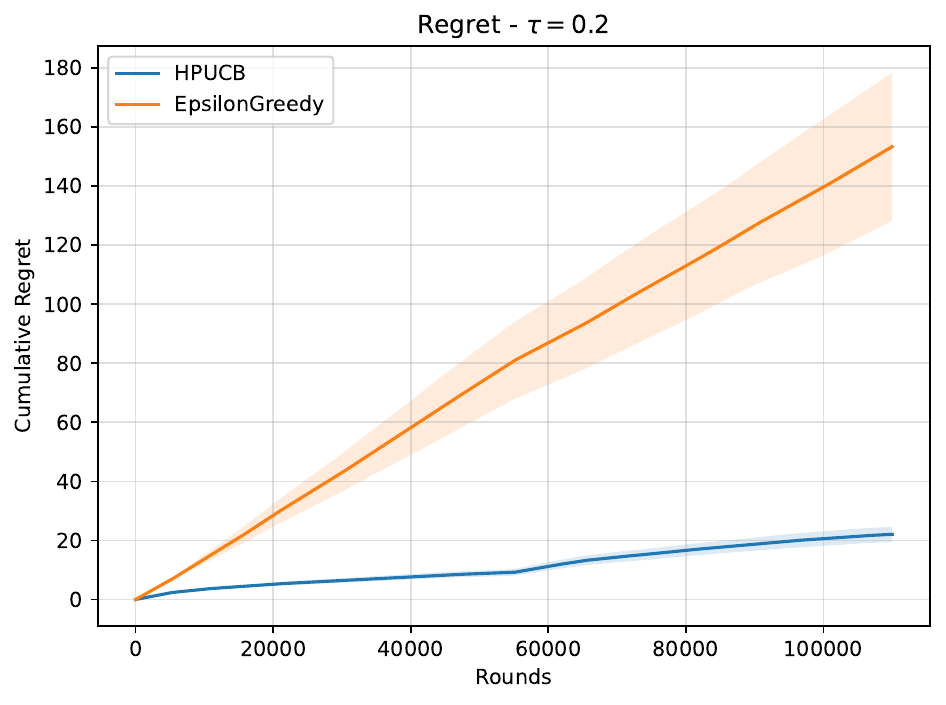}
        \caption{$\tau=0.2$}
        \label{fig:nasaPoint2}
    \end{subfigure}
    \caption{NASA Li-ion Battery Aging Dataset experiments for different safety thresholds.}
    \label{fig:nasa}
\end{figure}

Finally, we note that the guarantees of our algorithm hold for arbitrary context sequences $X_t$, including both stochastic and adversarial settings.
This property makes the proposed approach particularly appealing for applications such as battery charging and control problems, where an explicit dynamical model of the system may be unavailable or difficult to specify.
In such scenarios, our algorithm provides a principled way to perform safe decision-making directly from observed data.

\section{Conclusions}
\label{sec:conclusions}

We studied contextual bandits with continuous actions under stage-wise high-probability constraints on the realized cost, motivated by applications in which the variability of the outcome depends on the selected action.
We introduced an optimistic--pessimistic algorithm that balances reward exploration with conservative estimation of the feasible action set, and established regret guarantees for both linear and general reward--cost models.
Our experimental results further illustrate the importance of accounting for heteroscedasticity when safety is imposed on realized, rather than expected, costs.

Several directions remain open.
An important extension is to consider more general forms of heteroscedasticity beyond the action-scaled structure studied here, as well as Bayesian approaches to the same realized-cost constrained problem.
Another promising direction is a more extensive empirical evaluation in safety-critical real-world applications, particularly adaptive clinical dosage selection.

\bibliography{ref}
\bibliographystyle{plainnat}

\newpage
\appendix

  \hsize\textwidth
  \linewidth\hsize  {\centering
  {\Large\bfseries Appendix \par}}
  \vskip 0.2in
\tableofcontents
 \addcontentsline{toc}{section}{Supplementary Material}
\newpage
\appendix

\section{Constraint Formulation}
\label{sec:constraintForm}

\subsection{Proof of \Cref{lem:constraintFormulation}}
\label{app:constraint}

By \Cref{ass:noise-sub-gaussian}, the cost noise $\xi_t^c$ is conditionally $\useconstant{Cc}$-sub-Gaussian given the history $\mathcal{H}_t$.
Let $z_t := g(\alpha_t)$.
If $z_t=0$, then $C_t=0$, and since $\tau>0$ the constraint is satisfied trivially.
We therefore consider the case $z_t>0$.

\begin{thm}[Sub-Gaussian concentration bounds -- Theorem 5.3 in \cite{lattimore2020bandit}]
\label{thm:subgaussian_bound}
If $X$ is $\sigma$-sub-Gaussian, then for any $\epsilon>0$,
\[
\P(X\ge \epsilon)\le \exp\left(-\frac{\epsilon^2}{2\sigma^2}\right).
\]
\end{thm}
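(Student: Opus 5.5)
The plan is the standard Chernoff argument. Throughout I take ``$X$ is $\sigma$-sub-Gaussian'' in the sense of \Cref{ass:noise-sub-gaussian}: $\mathbb{E}[\exp(\lambda X)]\le \exp(\lambda^2\sigma^2/2)$ for all $\lambda\in\mathbb{R}$. If the bound is used conditionally on $\mathcal{H}_{t-1}$, as for $\xi_t^c$, every expectation and probability below is replaced by its conditional version, and nothing else changes. Fix $\epsilon>0$ and a free parameter $\lambda>0$, which will be optimized at the end.

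\textbf{Step 1 (exponentiate and apply Markov).} Since $x\mapsto e^{\lambda x}$ is strictly increasing for $\lambda>0$, the events $\{X\ge\epsilon\}$ and $\{e^{\lambda X}\ge e^{\lambda\epsilon}\}$ coincide. Markov's inequality applied to the nonnegative variable $e^{\lambda X}$ gives
\[
\P(X\ge\epsilon)\le e^{-\lambda\epsilon}\,\mathbb{E}\bigl[e^{\lambda X}\bigr].
\]

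\textbf{Step 2 (insert the sub-Gaussian moment generating function bound).} Plugging in the defining inequality yields
\[
\P(X\ge\epsilon)\le \exp\Bigl(\tfrac{\lambda^2\sigma^2}{2}-\lambda\epsilon\Bigr),
\]
valid for every $\lambda>0$.

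\textbf{Step 3 (optimize over $\lambda$).} The exponent is a convex quadratic in $\lambda$, minimized at $\lambda=\epsilon/\sigma^2$. This value is positive because $\epsilon>0$, so it is admissible. At this choice the exponent equals $\tfrac{\epsilon^2}{2\sigma^2}-\tfrac{\epsilon^2}{\sigma^2}=-\tfrac{\epsilon^2}{2\sigma^2}$, which is the claimed bound.

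There is no real obstacle here; the only care needed is to restrict to $\lambda>0$ so that the monotonicity in Step 1 is valid. In \Cref{lem:constraintFormulation} the result will then be applied to $\xi_t^c$ with $\sigma=\useconstant{Cc}$ and $\epsilon=\useconstant{Cc}\sqrt{2\log(1/\delta)}$, giving $\P(\xi_t^c\ge\epsilon)\le\delta$.
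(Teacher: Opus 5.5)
Your Chernoff argument (Markov's inequality applied to $e^{\lambda X}$, then the sub-Gaussian moment generating function bound, then the choice $\lambda=\epsilon/\sigma^2$) is correct and is the standard proof of Theorem~5.3 in \cite{lattimore2020bandit}; the paper cites that result rather than reproving it. Your remark that the argument carries over verbatim to conditional expectations given $\mathcal{H}_{t-1}$ is the right way to read how the bound is used in \Cref{lem:constraintFormulation}.
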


\constraintFormulation*

\begin{proof}
Since
\[
C_t
=
z_t\left(\langle X_t,\mu\opt\rangle+\xi_t^c\right),
\]
we have
\[
\frac{C_t-z_t\langle X_t,\mu\opt\rangle}{z_t}
=
\xi_t^c.
\]
Thus, conditional on $\mathcal{H}_t$,
\begin{align*}
\P(C_t\ge \tau\mid\mathcal{H}_t)
&=
\P\left(
\xi_t^c
\ge
\frac{\tau-z_t\langle X_t,\mu\opt\rangle}{z_t}
\,\middle|\,
\mathcal{H}_t
\right) \\
&\le
\exp\left(
-
\frac{
\left(
\dfrac{\tau-z_t\langle X_t,\mu\opt\rangle}{z_t}
\right)^2
}{
2\useconstant{Cc}^2
}
\right),
\end{align*}
where the last step follows from \Cref{thm:subgaussian_bound}.
Therefore, it is sufficient to require
\[
\useconstant{Cc}\sqrt{2\log\left(\frac{1}{\delta}\right)}
\le
\frac{\tau-z_t\langle X_t,\mu\opt\rangle}{z_t},
\]
which is equivalent to
\[
g(\alpha_t)
\left(
\langle X_t,\mu\opt\rangle
+
\useconstant{Cc}\sqrt{2\log\left(\frac{1}{\delta}\right)}
\right)
\le
\tau.
\]
Under this condition, $\P(C_t\le \tau\mid\mathcal{H}_t)\ge 1-\delta$.
\end{proof}

\subsection{Proof of \Cref{lem:feasibleSet}}
\label{app:feasibleSet}

\feasibleSet*

\begin{proof}
Let
\[
B_t(\mu)
:=
\langle X_t,\mu\rangle
+
\useconstant{Cc}\sqrt{2\log\left(\frac{1}{\delta}\right)}.
\]
On the event $\{\mu\opt\in\CC_t^c(\delta')\}$, which holds with probability at least $1-\delta'$ by \Cref{thm:yasin_theorem}, we show that
\[
\hat{\mathcal{A}}_t^f(\delta)
\subseteq
\mathcal{A}_t^f(\delta).
\]

We consider two cases.

\textbf{Case 1: $\KappaM_t=\emptyset$.}
Since $\mu\opt\in\CC_t^c(\delta')$, the only way the feasible set of \Cref{eq:mu_convex_prog} can be empty is if
\[
B_t(\mu\opt)<0.
\]
Then for every $\alpha\in[0,1]$,
\[
g(\alpha)B_t(\mu\opt)\le 0\le \tau.
\]
Hence $\mathcal{A}_t^f(\delta)=[0,1]$.
By construction, $\hat{\mathcal{A}}_t^f(\delta)=[0,1]$ as well, so the inclusion holds.

\textbf{Case 2: $\KappaM_t\neq\emptyset$.}
Let $\tilde{\mu}_t$ be the solution of \Cref{eq:mu_convex_prog}.
If $B_t(\mu\opt)<0$, then again $\mathcal{A}_t^f(\delta)=[0,1]$, and the inclusion is immediate.

It remains to consider the case $B_t(\mu\opt)\ge 0$.
Since $\mu\opt\in\KappaM_t$ and $\tilde{\mu}_t$ maximizes $\langle X_t,\mu\rangle$ over $\KappaM_t$, we have
\[
\langle X_t,\mu\opt\rangle
\le
\langle X_t,\tilde{\mu}_t\rangle,
\]
and therefore
\[
B_t(\mu\opt)\le B_t(\tilde{\mu}_t).
\]
Now take any $\alpha\in\hat{\mathcal{A}}_t^f(\delta)$.
By definition,
\[
g(\alpha)B_t(\tilde{\mu}_t)\le \tau.
\]
Since $g(\alpha)\ge 0$ and $B_t(\mu\opt)\le B_t(\tilde{\mu}_t)$, it follows that
\[
g(\alpha)B_t(\mu\opt)\le \tau.
\]
Thus $\alpha\in\mathcal{A}_t^f(\delta)$.
Therefore,
\[
\hat{\mathcal{A}}_t^f(\delta)
\subseteq
\mathcal{A}_t^f(\delta).
\]
Conditioning on $\{\mu\opt\in\CC_t^c(\delta')\}$ gives the desired probability bound.
\end{proof}

\section{Good Event}
\label{app:good_event}

\subsection{Proof of \Cref{lem:goodEvent}}

\goodEvent*

\begin{proof}
We prove that each component of the good event holds with probability at least $1-\delta'$, and then apply a union bound.
By \Cref{thm:yasin_theorem},
\[
\P(\Varepsilon_\theta)\ge 1-\delta',
\qquad
\P(\Varepsilon_\mu)\ge 1-\delta'.
\]
Moreover, \Cref{lem:feasibleSet} gives
\[
\P(\Varepsilon_A)\ge 1-\delta'.
\]
Therefore,
\[
\P(\Varepsilon)
=
\P(\Varepsilon_\theta\cap\Varepsilon_\mu\cap\Varepsilon_A)
\ge
1-3\delta',
\]
as claimed.
\end{proof}

\section{Analytical Solutions to \Cref{eq:theta_hat,eq:mu_convex_prog}}
\label{sec:KKT_derivation}

In this section, we derive closed-form solutions to \Cref{eq:theta_hat,eq:mu_convex_prog} in the linear case using the Karush--Kuhn--Tucker (KKT) conditions \citep{boyd2004convex}.
This avoids solving a convex program at each round and substantially reduces the computational cost of the algorithm.

\begin{lemma}[Linear maximization over an ellipsoid]
\label{lem:linear_max_ellipsoid}
Let $c\in\mathbb{R}^d$ with $c\neq 0$, let $\Sigma\in\mathbb{R}^{d\times d}$ be symmetric positive definite, and let $\beta>0$.
Consider
\[
\max_{x\in\mathbb{R}^d}\ \langle c,x\rangle
\quad\text{subject to}\quad
x^\top \Sigma x \le \beta^2 .
\]
Then the unique optimal solution is
\[
x^\star
=
\beta\,
\frac{\Sigma^{-1}c}{\norm{c}_{\Sigma^{-1}}},
\]
and the optimal value is
\[
\max_{x^\top \Sigma x\le \beta^2}
\langle c,x\rangle
=
\beta\norm{c}_{\Sigma^{-1}}.
\]
\end{lemma}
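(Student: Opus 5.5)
The proof reduces to Cauchy--Schwarz in the geometry induced by $\Sigma$, followed by a check that the bound is attained and that the maximizer is unique. Since $\Sigma$ is symmetric positive definite, it has a symmetric positive definite square root $\Sigma^{1/2}$. For any feasible $x$ we write
\[
\langle c,x\rangle=\langle \Sigma^{-1/2}c,\Sigma^{1/2}x\rangle\le \norm{\Sigma^{-1/2}c}_2\,\norm{\Sigma^{1/2}x}_2=\norm{c}_{\Sigma^{-1}}\sqrt{x^\top\Sigma x}\le \beta\norm{c}_{\Sigma^{-1}} .
\]
This gives the upper bound $\beta\norm{c}_{\Sigma^{-1}}$ on the optimal value. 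Since $c\neq0$ and $\Sigma^{-1}$ is positive definite, we have $\norm{c}_{\Sigma^{-1}}>0$, so the candidate $x^\star$ is well defined.

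Next, we verify that $x^\star$ is feasible and attains the bound:
\[
(x^\star)^\top\Sigma x^\star=\beta^2\,\frac{c^\top\Sigma^{-1}c}{\norm{c}_{\Sigma^{-1}}^2}=\beta^2,
\qquad
\langle c,x^\star\rangle=\beta\,\frac{c^\top\Sigma^{-1}c}{\norm{c}_{\Sigma^{-1}}}=\beta\norm{c}_{\Sigma^{-1}} .
\]
Hence $x^\star$ is optimal and the optimal value is as claimed.

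For uniqueness, suppose $x$ is any optimizer. Then both inequalities in the chain above must be tight.
\begin{itemize}
\item Tightness of the second inequality forces $x^\top\Sigma x=\beta^2$.
\item Equality in Cauchy--Schwarz forces $\Sigma^{1/2}x=\gamma\,\Sigma^{-1/2}c$ for some $\gamma\ge0$. The sign of $\gamma$ is nonnegative because the inner product equals a positive value.
\end{itemize}
The second condition gives $x=\gamma\Sigma^{-1}c$. Substituting into the norm constraint yields $\gamma^2\norm{c}_{\Sigma^{-1}}^2=\beta^2$, so $\gamma=\beta/\norm{c}_{\Sigma^{-1}}$ and therefore $x=x^\star$.

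As a cross-check consistent with the section's KKT framing, the Lagrangian stationarity condition $c=2\nu\Sigma x$ with $\nu>0$ and an active constraint leads to the same point. The Cauchy--Schwarz route is cleaner, however, since it delivers global optimality and uniqueness directly.

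The only step requiring care is the uniqueness argument, specifically the equality case of Cauchy--Schwarz together with the sign of $\gamma$. It is routine once the map $x\mapsto\Sigma^{1/2}x$ is used to reduce the problem to the Euclidean ball.
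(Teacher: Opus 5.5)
Your proof is correct, but it takes a different route from the paper's. The paper treats the problem as a convex program. It invokes Slater's condition so that the KKT conditions are necessary and sufficient, solves the stationarity equation $c=2\lambda\Sigma x$ to get $x=\Sigma^{-1}c/(2\lambda)$, and uses $c\neq 0$ to argue that the constraint is active. It then solves for the multiplier $\lambda=\|c\|_{\Sigma^{-1}}/(2\beta)$ and asserts uniqueness by appealing to strict convexity of the ellipsoid boundary. You instead whiten by $\Sigma^{1/2}$ and apply Cauchy--Schwarz. This gives the bound $\langle c,x\rangle\le\|c\|_{\Sigma^{-1}}\sqrt{x^\top\Sigma x}$ for every feasible $x$ at once; you then check attainment and get uniqueness from the equality case of Cauchy--Schwarz. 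Your route is fully elementary and needs no duality theory. Your uniqueness step is also rigorous, whereas the paper's one-line strict-convexity remark is only a sketch. A complete KKT version would note three things: every optimizer must satisfy the KKT system, $c\neq 0$ forces $\lambda>0$, and the resulting system then has exactly one solution. Your argument also yields exactly the inequality $\langle X_t,\theta-\hat{\theta}_t\rangle\le\|X_t\|_{\Sigma_t^{-1}}\|\theta-\hat{\theta}_t\|_{\Sigma_t}$ that the paper later uses in its regret proof. The KKT route, in exchange, gives the explicit multiplier and a generic template that carries over to other smooth convex constraint sets.
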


\begin{proof}
Since $\Sigma\succ 0$, the feasible set is nonempty, compact, and convex.
The objective is continuous and linear, so an optimum exists.
The KKT conditions are necessary and sufficient because Slater's condition holds.
The Lagrangian is
\[
\mathcal{L}(x,\lambda)
=
\langle c,x\rangle
-
\lambda(x^\top\Sigma x-\beta^2),
\qquad
\lambda\ge0.
\]
Stationarity gives
\[
c-2\lambda\Sigma x=0,
\qquad\text{hence}\qquad
x=\frac{1}{2\lambda}\Sigma^{-1}c.
\]
Since $c\neq0$, the optimum lies on the boundary, so $x^\top\Sigma x=\beta^2$.
Substituting the expression for $x$ gives
\[
\beta^2
=
\frac{1}{4\lambda^2}c^\top\Sigma^{-1}c.
\]
Thus
\[
\lambda
=
\frac{1}{2\beta}
\sqrt{c^\top\Sigma^{-1}c}.
\]
Substituting back yields
\[
x^\star
=
\beta\frac{\Sigma^{-1}c}{\sqrt{c^\top\Sigma^{-1}c}}
=
\beta\frac{\Sigma^{-1}c}{\norm{c}_{\Sigma^{-1}}}.
\]
The optimal value follows immediately.
Uniqueness follows from strict convexity of the ellipsoid boundary in the direction of the nonzero linear functional.
\end{proof}

\begin{proposition}[Closed form for $\tilde{\theta}_t$]
\label{prop:KKT_theta}
The solution of \Cref{eq:theta_hat} is uniquely given by
\[
\tilde{\theta}_t
=
\hat{\theta}_t
+
\beta_t^r(\delta',d)
\frac{\Sigma_t^{-1}X_t}{\norm{X_t}_{\Sigma_t^{-1}}}.
\]
Moreover,
\[
\max_{\theta\in\CC_t^r(\delta')}
\langle X_t,\theta\rangle
=
\langle X_t,\hat{\theta}_t\rangle
+
\beta_t^r(\delta',d)
\norm{X_t}_{\Sigma_t^{-1}}.
\]
\end{proposition}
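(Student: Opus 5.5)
The plan is to reduce the problem to \Cref{lem:linear_max_ellipsoid} by a change of variables centered at the least-squares estimate. Write any $\theta\in\CC_t^r(\delta')$ as $\theta=\hat{\theta}_t+x$. By \eqref{eq:assymetric-confidence-sets}, membership in $\CC_t^r(\delta')$ is equivalent to $x^\top\Sigma_t x\le (\beta_t^r)^2$. The objective splits as $\langle X_t,\theta\rangle=\langle X_t,\hat{\theta}_t\rangle+\langle X_t,x\rangle$. The first term does not depend on $x$, so maximizing over $\theta\in\CC_t^r$ is the same as maximizing $\langle X_t,x\rangle$ over the centered ellipsoid $\{x: x^\top\Sigma_t x\le(\beta_t^r)^2\}$, and the maximizers correspond via $\theta=\hat\theta_t+x$.

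Next, I check the hypotheses of \Cref{lem:linear_max_ellipsoid} with $c=X_t$, $\Sigma=\Sigma_t$, $\beta=\beta_t^r(\delta',d)$.
\begin{itemize}
\item $\Sigma_t$ is symmetric positive definite because $\lambda>0$ and it is $\lambda I$ plus a sum of PSD rank-one terms.
\item $\beta_t^r>0$ because it contains the term $\sqrt{\lambda}\useconstant{S}>0$. The logarithmic term is nonnegative as long as $\delta'<1$, since then $(1+N(t)\useconstant{L}^2/\lambda)/\delta'\ge 1$.
\item $X_t\neq 0$ is needed. This is the one point requiring care: if $X_t=0$, the objective is identically zero, every point of the ellipsoid is optimal, and uniqueness fails. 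The displayed formula is then also undefined, because it divides by $\norm{X_t}_{\Sigma_t^{-1}}=0$. I would therefore state the proposition under $X_t\neq0$, implicitly as the paper does. In the degenerate case the optimal value $\langle X_t,\hat\theta_t\rangle+0$ still holds, and the algorithm's choice of $\alpha_t$ is unaffected.
\end{itemize}

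Applying \Cref{lem:linear_max_ellipsoid} gives the unique maximizer $x^\star=\beta_t^r\Sigma_t^{-1}X_t/\norm{X_t}_{\Sigma_t^{-1}}$ with value $\beta_t^r\norm{X_t}_{\Sigma_t^{-1}}$. Translating back by $\hat\theta_t$ gives both claimed identities. Uniqueness transfers because the map $x\mapsto\hat\theta_t+x$ is a bijection between the two feasible sets.

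I expect no real obstacle beyond the $X_t=0$ caveat. If one distrusts the uniqueness argument in \Cref{lem:linear_max_ellipsoid}, there is a cleaner direct proof via Cauchy--Schwarz in the $\Sigma_t$ geometry:
\[
\langle X_t,x\rangle=\langle \Sigma_t^{-1/2}X_t,\Sigma_t^{1/2}x\rangle\le\norm{X_t}_{\Sigma_t^{-1}}\norm{x}_{\Sigma_t}\le\beta_t^r\norm{X_t}_{\Sigma_t^{-1}}.
\]
Equality in this chain forces two things. First, $\Sigma_t^{1/2}x$ must be a nonnegative multiple of $\Sigma_t^{-1/2}X_t$. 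Second, $\norm{x}_{\Sigma_t}=\beta_t^r$. Together these pin down $x^\star$ uniquely.
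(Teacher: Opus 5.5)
Your proposal is correct and matches the paper's proof, which likewise substitutes $x=\theta-\hat{\theta}_t$, $c=X_t$ and invokes \Cref{lem:linear_max_ellipsoid}. Your hypothesis checks, especially the $X_t\neq 0$ caveat that the paper leaves implicit, and your Cauchy--Schwarz uniqueness argument are sound additions rather than a different route.
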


\begin{proof}
Apply \Cref{lem:linear_max_ellipsoid} with
$c=X_t$ and $x=\theta-\hat{\theta}_t$.
\end{proof}

\begin{proposition}[Closed form for $\tilde{\mu}_t$]
\label{prop:KKT_mu}
The solution of
\[
\max_{\mu\in\CC_t^c(\delta')}
\langle X_t,\mu\rangle
\]
is uniquely given by
\[
\tilde{\mu}_t
=
\hat{\mu}_t
+
\beta_t^c(\delta',d)
\frac{\Sigma_t^{-1}X_t}{\norm{X_t}_{\Sigma_t^{-1}}}.
\]
Moreover,
\[
\max_{\mu\in\CC_t^c(\delta')}
\langle X_t,\mu\rangle
=
\langle X_t,\hat{\mu}_t\rangle
+
\beta_t^c(\delta',d)
\norm{X_t}_{\Sigma_t^{-1}}.
\]
\end{proposition}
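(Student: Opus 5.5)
This is the mirror image of \Cref{prop:KKT_theta}, and I will prove it by the same reduction to \Cref{lem:linear_max_ellipsoid}. Substitute $x=\mu-\hat{\mu}_t$. Then the constraint $\mu\in\CC_t^c(\delta')$ becomes $x^\top\Sigma_t x\le(\beta_t^c)^2$, and the objective becomes
\[
\langle X_t,\mu\rangle=\langle X_t,\hat{\mu}_t\rangle+\langle X_t,x\rangle .
\]
The first summand does not depend on $x$, so maximizing over $\mu$ is the same as maximizing $\langle X_t,x\rangle$ over the ellipsoid $\{x: x^\top\Sigma_t x\le(\beta_t^c)^2\}$.

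Next I check the hypotheses of \Cref{lem:linear_max_ellipsoid} with $c=X_t$, $\Sigma=\Sigma_t$ and $\beta=\beta_t^c(\delta',d)$.
\begin{itemize}
\item $\Sigma_t$ is symmetric positive definite, since $\lambda>0$ and the sum of outer products in \eqref{eq:reg_covariance_m} is positive semidefinite.
\item $\beta_t^c>0$, because it contains the term $\sqrt{\lambda}\useconstant{S}>0$.
\item The lemma needs $c\neq0$. If $X_t=0$, the objective is constant and every point of the ellipsoid is optimal, so uniqueness fails and $\norm{X_t}_{\Sigma_t^{-1}}=0$ makes the formula undefined. I will therefore state the proposition for $X_t\neq0$, noting that when $X_t=0$ any $\mu$ in the ellipsoid, for instance $\hat{\mu}_t$, attains the value $\langle X_t,\hat{\mu}_t\rangle=0$.
\end{itemize}

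With these in place, the lemma gives the unique maximizer
\[
x^\star=\beta_t^c\,\frac{\Sigma_t^{-1}X_t}{\norm{X_t}_{\Sigma_t^{-1}}},
\]
with optimal value $\beta_t^c\norm{X_t}_{\Sigma_t^{-1}}$. Translating back by $\hat{\mu}_t$ gives $\tilde{\mu}_t=\hat{\mu}_t+x^\star$ and the stated optimal value $\langle X_t,\hat{\mu}_t\rangle+\beta_t^c\norm{X_t}_{\Sigma_t^{-1}}$. Uniqueness carries over because the translation is a bijection.

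The only step needing care is the relation to the full program \eqref{eq:mu_convex_prog}, which also imposes $\langle X_t,\mu\rangle+\useconstant{Cc}\sqrt{2\log(1/\delta)}\ge0$. The proposition as stated concerns only the ellipsoid, but for use in the algorithm I will add the following observation. If $\KappaM_t\neq\emptyset$, some feasible point satisfies the half-space constraint, so the unconstrained maximum $\tilde{\mu}_t$ satisfies it too, since its objective value is at least as large. Hence $\tilde{\mu}_t$ is also the unique solution of \eqref{eq:mu_convex_prog}. If $\KappaM_t=\emptyset$, the second case of \eqref{eq:feasible_set} applies, which is detected by checking whether $\langle X_t,\tilde{\mu}_t\rangle+\useconstant{Cc}\sqrt{2\log(1/\delta)}<0$.
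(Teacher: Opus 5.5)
Your proof is correct and is exactly the paper's argument: substitute $x=\mu-\hat{\mu}_t$ and apply \Cref{lem:linear_max_ellipsoid} with $c=X_t$, $\Sigma=\Sigma_t$, $\beta=\beta_t^c$. Your explicit check of the hypotheses, including the caveat $X_t\neq 0$ that the paper silently assumes, and your remark on the half-space constraint, which reproduces \Cref{lem:cost_program_feasibility}, are good additions. One small slip: when $\KappaM_t=\emptyset$ it is the first case of \eqref{eq:feasible_set} that applies, giving $[0,1]$, not the second.
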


\begin{proof}
Apply \Cref{lem:linear_max_ellipsoid} with
$c=X_t$ and $x=\mu-\hat{\mu}_t$.
\end{proof}

\begin{lemma}[Feasibility of the pessimistic cost program]
\label{lem:cost_program_feasibility}
Let
\[
\CC_t^c
=
\left\{
\mu\in\mathbb{R}^d:
\norm{\mu-\hat{\mu}_t}_{\Sigma_t}\le \beta_t^c
\right\},
\]
and define
\[
b_\delta
=
\useconstant{Cc}\sqrt{2\log\left(\frac{1}{\delta}\right)}.
\]
Consider
\[
\max_{\mu\in\CC_t^c}
\langle X_t,\mu\rangle
\qquad
\text{subject to}
\qquad
\langle X_t,\mu\rangle+b_\delta\ge 0.
\]
Let
\[
\bar{v}_t
=
\max_{\mu\in\CC_t^c}
\langle X_t,\mu\rangle
=
\langle X_t,\hat{\mu}_t\rangle
+
\beta_t^c\norm{X_t}_{\Sigma_t^{-1}}.
\]
Then the program is infeasible if and only if $\bar{v}_t+b_\delta<0$.
If $\bar{v}_t+b_\delta\ge0$, then the unconstrained maximizer over $\CC_t^c$ is feasible and is also the optimizer of the constrained problem.
\end{lemma}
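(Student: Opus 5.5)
The plan is to reduce everything to the scalar image of the ellipsoid $\CC_t^c$ under the linear functional $\mu\mapsto\langle X_t,\mu\rangle$. Since $\CC_t^c$ is nonempty, compact and convex (it contains $\hat{\mu}_t$, and $\Sigma_t\succ0$ because $\lambda>0$), and the functional is continuous and linear, the image is a closed interval $[\underline{v}_t,\bar{v}_t]$. Applying \Cref{prop:KKT_mu}, or directly \Cref{lem:linear_max_ellipsoid} with $c=X_t$ and $x=\mu-\hat{\mu}_t$, identifies the right endpoint as $\bar{v}_t=\langle X_t,\hat{\mu}_t\rangle+\beta_t^c\norm{X_t}_{\Sigma_t^{-1}}$. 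Applying the same lemma with $c=-X_t$ gives the symmetric left endpoint, although only $\bar{v}_t$ will be needed. The degenerate case $X_t=0$ is handled separately. There, every $\mu$ gives value $0$, so $\bar{v}_t=0$ and the formula still holds. The program is then feasible iff $b_\delta\ge0$, which is always true, and any $\mu$ in $\CC_t^c$ is optimal. This is consistent with the claim.

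For the ``iff'' statement, I would argue as follows. If $\bar{v}_t+b_\delta<0$, then every $\mu\in\CC_t^c$ satisfies $\langle X_t,\mu\rangle+b_\delta\le\bar{v}_t+b_\delta<0$. Hence no point satisfies the linear constraint, and the program is infeasible. Conversely, suppose $\bar{v}_t+b_\delta\ge0$. Let $\mu^\star$ be the unconstrained maximizer from \Cref{prop:KKT_mu}. Then $\langle X_t,\mu^\star\rangle+b_\delta=\bar{v}_t+b_\delta\ge0$, so $\mu^\star$ lies in $\KappaM_t$ and the program is feasible. Together these show that infeasibility is equivalent to $\bar{v}_t+b_\delta<0$.

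For optimality, note that $\KappaM_t\subseteq\CC_t^c$. Therefore the constrained optimum is at most $\bar{v}_t$, and $\mu^\star\in\KappaM_t$ attains $\bar{v}_t$, so $\mu^\star$ is a constrained optimizer. If $X_t\neq0$, uniqueness passes to the constrained problem as well. Any other constrained optimizer would be an unconstrained maximizer over $\CC_t^c$ with the same value, contradicting the uniqueness in \Cref{lem:linear_max_ellipsoid}.

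No step is a real obstacle. The only care needed is the bookkeeping for $X_t=0$, where \Cref{lem:linear_max_ellipsoid} assumed $c\neq0$. I treat that case directly as above rather than through the lemma.
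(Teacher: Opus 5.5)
Your proposal is correct and uses essentially the same argument as the paper. Infeasibility follows from $\langle X_t,\mu\rangle+b_\delta\le\bar{v}_t+b_\delta<0$ for every $\mu\in\CC_t^c$, and optimality follows because the unconstrained maximizer is feasible and $\KappaM_t\subseteq\CC_t^c$; your separate handling of $X_t=0$ (where \Cref{lem:linear_max_ellipsoid} does not apply) and your uniqueness remark are sound refinements the paper omits.
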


\begin{proof}
If $\bar{v}_t+b_\delta<0$, then for every $\mu\in\CC_t^c$,
\[
\langle X_t,\mu\rangle+b_\delta
\le
\bar{v}_t+b_\delta
<0.
\]
Thus the constraint cannot be satisfied, and the program is infeasible.

If $\bar{v}_t+b_\delta\ge0$, then the unconstrained maximizer satisfies the additional constraint.
Since it already maximizes $\langle X_t,\mu\rangle$ over the larger set $\CC_t^c$, it also maximizes the same objective over the constrained feasible subset.
\end{proof}

\section{Regret Analysis -- Linear Case}
\label{sec:regretAnalysis}

As shown in \Cref{sec:reg_analysisLinear}, we decompose the regret into two terms: the cost of estimating $\theta\opt$ and the cost of estimating $\mu\opt$.
We now bound these terms in terms of the problem parameters.

\subsection{Proof of \Cref{lem:reg_first_term_bound}}
\label{appen:reg_first_term_bound_proof}

\regFirstTermBound*

\begin{proof}
Condition on the event $\Varepsilon_\theta$, so that $\theta\opt\in\CC_t^r$ for all $t\in[T]$.
Since $g(\alpha_t\opt)\ge0$, we have
\begin{align*}
g(\alpha_t\opt)\langle X_t,\theta\opt\rangle
&\le
\max_{\theta\in\CC_t^r}
g(\alpha_t\opt)\langle X_t,\theta\rangle \\
&\le
\max_{\alpha\in\mathcal{A}_t^f}
\max_{\theta\in\CC_t^r}
g(\alpha)\langle X_t,\theta\rangle \\
&=
\max_{\alpha\in\mathcal{A}_t^f}
g(\alpha)\langle X_t,\tilde{\theta}_t\rangle \\
&=
g(\tilde{\alpha}_t)\langle X_t,\tilde{\theta}_t\rangle .
\end{align*}
Therefore,
\[
\left(g(\alpha_t\opt)-g(\tilde{\alpha}_t)\right)
\langle X_t,\theta\opt\rangle
\le
g(\tilde{\alpha}_t)
\langle X_t,\tilde{\theta}_t-\theta\opt\rangle .
\]
\end{proof}

\subsection{Proof of \Cref{lem:reg_sec_term_bound}}
\label{appen:reg_sec_term_bound}

\regSecTermBound*

\begin{proof}
Let
\[
b_\delta
=
\useconstant{Cc}\sqrt{2\log\left(\frac{1}{\delta}\right)},
\qquad
B_t^\star
=
\langle X_t,\mu\opt\rangle+b_\delta,
\qquad
\widetilde{B}_t
=
\langle X_t,\tilde{\mu}_t\rangle+b_\delta.
\]
On the event $\Varepsilon_\mu$, the pessimistic construction gives
\[
B_t^\star\le \widetilde{B}_t
\]
whenever $B_t^\star\ge0$.
If $B_t^\star<0$, then the true feasible set is all of $[0,1]$.

Since $g$ is strictly increasing, maximizing
\[
g(\alpha)\langle X_t,\tilde{\theta}_t\rangle
\]
over an interval is equivalent to choosing the largest feasible action when
$\langle X_t,\tilde{\theta}_t\rangle\ge0$, and choosing $\alpha=0$ otherwise.
If $\langle X_t,\tilde{\theta}_t\rangle<0$, then both $\tilde{\alpha}_t$ and $\alpha_t$ are zero, and the claim is immediate.

Assume now that $\langle X_t,\tilde{\theta}_t\rangle\ge0$.
Define the effective actions
\[
\tilde{z}_t:=g(\tilde{\alpha}_t),
\qquad
z_t:=g(\alpha_t).
\]
The true and estimated feasible sets in the effective action variable are intervals with upper endpoints
\[
\bar{z}_t^\star
=
\begin{cases}
1, & B_t^\star\le0,\\[0.25em]
\min\left\{1,\dfrac{\tau}{B_t^\star}\right\}, & B_t^\star>0,
\end{cases}
\]
and
\[
\widehat{z}_t
=
\begin{cases}
1, & \widetilde{B}_t\le0,\\[0.25em]
\min\left\{1,\dfrac{\tau}{\widetilde{B}_t}\right\}, & \widetilde{B}_t>0.
\end{cases}
\]
Thus,
\[
\tilde{z}_t=\bar{z}_t^\star,
\qquad
z_t=\widehat{z}_t.
\]

We claim that
\[
\bar{z}_t^\star-\widehat{z}_t
\le
\frac{\widetilde{B}_t-B_t^\star}{\tau}
=
\frac{\langle X_t,\tilde{\mu}_t-\mu\opt\rangle}{\tau}.
\]

If $B_t^\star\le0$, then $\bar{z}_t^\star=1$.
If $\widehat{z}_t=1$, the claim is trivial.
Otherwise, $\widehat{z}_t=\tau/\widetilde{B}_t$, which implies $\widetilde{B}_t\ge\tau$.
Then
\[
1-\frac{\tau}{\widetilde{B}_t}
=
\frac{\widetilde{B}_t-\tau}{\widetilde{B}_t}
\le
\frac{\widetilde{B}_t-B_t^\star}{\tau},
\]
where we used $B_t^\star\le0\le\tau$ and $\widetilde{B}_t\ge\tau$.

If $B_t^\star>0$, then by pessimism $B_t^\star\le \widetilde{B}_t$.
If either upper endpoint equals $1$, the desired inequality is immediate.
Otherwise,
\[
\bar{z}_t^\star-\widehat{z}_t
=
\frac{\tau}{B_t^\star}
-
\frac{\tau}{\widetilde{B}_t}
=
\frac{\tau(\widetilde{B}_t-B_t^\star)}
{B_t^\star\widetilde{B}_t}.
\]
In this subcase both upper endpoints are strictly below $1$, hence
$B_t^\star\ge\tau$ and $\widetilde{B}_t\ge\tau$.
Therefore,
\[
\bar{z}_t^\star-\widehat{z}_t
\le
\frac{\widetilde{B}_t-B_t^\star}{\tau}.
\]

Combining the cases gives
\[
g(\tilde{\alpha}_t)-g(\alpha_t)
\le
\frac{\langle X_t,\tilde{\mu}_t-\mu\opt\rangle}{\tau}.
\]
By Cauchy--Schwarz and \Cref{ass:bounded-reward-cost-param,ass:bounded-contexts},
\[
\langle X_t,\theta\opt\rangle
\le
\norm{X_t}_2\norm{\theta\opt}_2
\le
\useconstant{S}\useconstant{L}.
\]
Thus,
\[
\left(g(\tilde{\alpha}_t)-g(\alpha_t)\right)
\langle X_t,\theta\opt\rangle
\le
\useconstant{S}\useconstant{L}
\cdot
\frac{\langle X_t,\tilde{\mu}_t-\mu\opt\rangle}{\tau}.
\]
\end{proof}

\subsection{Proof of \Cref{theorem:regret_bound}}
\label{appen:proof_of_regret_bound}

As usual in linear bandit analysis, we use the elliptic potential lemma.
The only difference is that our algorithm collects informative samples only in rounds for which $g(\alpha_t)>0$.
Since the number of such rounds is at most $T$, the standard bound applies.

\begin{lemma}[Elliptic potential lemma]
\label{lem:elliptic_potential}
Let $X_1,\ldots,X_{N(T)}$ be the contexts collected by \Cref{alg:HPUCB} in rounds with $g(\alpha_t)>0$.
For any regularization parameter $0<\lambda<1$,
\[
\sum_{t=1}^{T}
\norm{X_t}_{\Sigma_t^{-1}}^2
\le
d\log\left(1+\frac{T\useconstant{L}^2}{d}\right).
\]
\end{lemma}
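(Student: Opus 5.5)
The plan is the standard determinant (potential) argument, applied only to the rounds in which $\Sigma_t$ is actually updated.

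\textbf{Step 1: reduce to informative rounds.} On a round with $g(\alpha_t)=0$ the matrix is not updated, so $\Sigma_{t+1}=\Sigma_t$. I would therefore index the sum by the informative rounds $t_1<\dots<t_{N(T)}$ (the ones listed in the statement of \Cref{lem:elliptic_potential}). On these rounds $\Sigma_{t_{k+1}}=\Sigma_{t_k}+X_{t_k}X_{t_k}^\top$.

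\textbf{Step 2: matrix determinant lemma.} Write $x_k:=\norm{X_{t_k}}^2_{\Sigma_{t_k}^{-1}}$. The matrix determinant lemma gives $\det\Sigma_{t_{k+1}}=\det\Sigma_{t_k}\,(1+x_k)$. Telescoping then yields
\[
\sum_{k}\log(1+x_k)=\log\frac{\det\Sigma_{T+1}}{\det(\lambda I)} .
\]

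\textbf{Step 3: AM--GM on the determinant.} Since $\operatorname{tr}\Sigma_{T+1}\le d\lambda+N(T)\useconstant{L}^2\le d\lambda+T\useconstant{L}^2$ by \Cref{ass:bounded-contexts}, AM--GM on the eigenvalues gives $\det\Sigma_{T+1}\le(\lambda+T\useconstant{L}^2/d)^d$. Hence
\[
\sum_k\log(1+x_k)\le d\log\Bigl(1+\tfrac{T\useconstant{L}^2}{d\lambda}\Bigr).
\]

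\textbf{Step 4: pass from $\sum_k\log(1+x_k)$ to $\sum_k x_k$ with exactly the stated constants.} This is the step I expect to be the main obstacle. The elementary inequality $x\le \log(1+x)$ is false for $x>0$. To obtain the statement as written, with constant $1$ and no $\lambda$ inside the logarithm, Step 3 alone is not enough. I would try to exploit the normalization in which the lemma is used, namely $\useconstant{L}$ small relative to $\lambda$, so that every $x_k\le \useconstant{L}^2/\lambda$ is small. The $1/\lambda$ loss in the logarithm would then have to be absorbed, and this is where the hypothesis $0<\lambda<1$ must enter.

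I will flag this honestly. Taken literally, the inequality cannot hold for all $\lambda\in(0,1)$ and all $\useconstant{L}$. Already the first term equals $\norm{X_{t_1}}_2^2/\lambda$, which can equal $\useconstant{L}^2/\lambda$, and this exceeds $d\log(1+T\useconstant{L}^2/d)$ when $\lambda$ is small. So the proof must either rely on an implicit normalization (e.g.\ $\useconstant{L}^2\le\lambda$ together with a rescaling of $T$) or be read up to the constant and $\lambda$-dependence that Step 3 produces.

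I would therefore carry out Steps 1--3 rigorously. In Step 4 I would state precisely the additional normalization under which $x_k\le\log(1+x_k)\cdot c$ with $c$ absorbed, recovering the displayed bound. That rate is all that \Cref{theorem:regret_bound} needs.
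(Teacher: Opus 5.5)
You are right that the displayed inequality cannot be proved. Your Steps 1--3 are exactly the standard determinant argument the paper appeals to; the paper offers nothing beyond ``$N(T)\le T$, so the standard bound applies.'' The failure is sharper than ``$\lambda$ small.'' Take $T=1$ with $\|X_1\|_2=\useconstant{L}$: the left side is $\useconstant{L}^2/\lambda>\useconstant{L}^2\ge d\log(1+\useconstant{L}^2/d)$ for \emph{every} $\lambda\in(0,1)$, and the inequality also fails at $\lambda=1$. So no normalization compatible with $0<\lambda<1$ recovers the displayed bound, and the promise in your Step~4 should be dropped. The hypothesis is backwards relative to the standard lemma (Lemma~11 of \citet{abbasi2011improved}), which requires $\lambda\ge\max\{1,\useconstant{L}^2\}$ and carries a factor $2$. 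The clean way to close Step~4 uses $x_k\le u:=\useconstant{L}^2/\lambda$ and the fact that $x\mapsto x/\log(1+x)$ is increasing. Together these give $x_k\le\frac{u}{\log(1+u)}\log(1+x_k)$, and hence
\[
\sum_k x_k\le\frac{u}{\log(1+u)}\,d\log\Bigl(1+\frac{T\useconstant{L}^2}{d\lambda}\Bigr).
\]
This is at most $2d\log(1+T\useconstant{L}^2/d)$ when $\lambda\ge\max\{1,\useconstant{L}^2\}$. For a fixed $\lambda<1$ it only changes the constant in \Cref{theorem:regret_bound}.

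Second, Step~1 is not a mere re-indexing of $\sum_{t=1}^T$. On a round with $g(\alpha_t)=0$, the term $\|X_t\|^2_{\Sigma_t^{-1}}$ still appears in the displayed sum but does not enter the telescoping product. Such terms can accumulate linearly in $T$. Repeat one context $x$ with $\langle x,\theta^*\rangle<0$. Once the optimistic index $\langle x,\tilde\theta_t\rangle$ becomes negative, $\Sigma_t$, $\hat\theta_t$ and $\beta^r_t$ are frozen, since they change only on informative rounds. Every later round is then uninformative and adds the same positive constant. Your argument therefore bounds only the sum over collected contexts, which is the reading suggested by the lemma's first sentence.

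That suffices for \Cref{theorem:regret_bound} only after one more observation, which the paper's proof also omits because it applies the Term-2 bound on every round. On the good event, $g(\alpha_t)=0$ forces $\langle X_t,\tilde\theta_t\rangle\le 0$; otherwise the algorithm would pick the positive right endpoint of $\hat{\mathcal{A}}^f_t$. Hence $\langle X_t,\theta^*\rangle\le 0$ and $r^*(X_t)=r_t=0$. Uninformative rounds thus incur zero regret and can be discarded before the potential lemma is applied.
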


By Cauchy--Schwarz,
\[
\sum_{t=1}^{T}
\norm{X_t}_{\Sigma_t^{-1}}
\le
\sqrt{
dT\log\left(1+\frac{T\useconstant{L}^2}{d}\right)
}.
\]

\regretBoundLinear*

\begin{proof}
On the good event,
\begin{align*}
\mathcal{R}_{\mathcal{C}}(T)
&=
\sum_{t=1}^{T}
\left(g(\alpha_t\opt)-g(\tilde{\alpha}_t)\right)
\langle X_t,\theta\opt\rangle
+
\sum_{t=1}^{T}
\left(g(\tilde{\alpha}_t)-g(\alpha_t)\right)
\langle X_t,\theta\opt\rangle
\\
&\le
\sum_{t=1}^{T}
g(\tilde{\alpha}_t)
\langle X_t,\tilde{\theta}_t-\theta\opt\rangle
+
\frac{\useconstant{S}\useconstant{L}}{\tau}
\sum_{t=1}^{T}
\langle X_t,\tilde{\mu}_t-\mu\opt\rangle
\\
&\le
\sum_{t=1}^{T}
g(\tilde{\alpha}_t)
\norm{X_t}_{\Sigma_t^{-1}}
\norm{\tilde{\theta}_t-\theta\opt}_{\Sigma_t}
+
\frac{\useconstant{S}\useconstant{L}}{\tau}
\sum_{t=1}^{T}
\norm{X_t}_{\Sigma_t^{-1}}
\norm{\tilde{\mu}_t-\mu\opt}_{\Sigma_t}
\\
&\le
2
\sum_{t=1}^{T}
\beta_t^r(\delta'/3,d)
\norm{X_t}_{\Sigma_t^{-1}}
+
\frac{2\useconstant{S}\useconstant{L}}{\tau}
\sum_{t=1}^{T}
\beta_t^c(\delta'/3,d)
\norm{X_t}_{\Sigma_t^{-1}}
\\
&\le
2
\left(
1+\frac{\useconstant{S}\useconstant{L}}{\tau}
\right)
\max\{\beta_T^r(\delta'/3,d),\beta_T^c(\delta'/3,d)\}
\sum_{t=1}^{T}
\norm{X_t}_{\Sigma_t^{-1}}
\\
&\le
2
\left(
1+\frac{\useconstant{S}\useconstant{L}}{\tau}
\right)
\max\{\beta_T^r(\delta'/3,d),\beta_T^c(\delta'/3,d)\}
\sqrt{
dT\log\left(1+\frac{T\useconstant{L}^2}{d}\right)
}.
\end{align*}
In the third inequality, we used $g(\tilde{\alpha}_t)\le1$ and the fact that both $\tilde{\theta}_t,\theta\opt$ and $\tilde{\mu}_t,\mu\opt$ lie in their corresponding confidence sets.
\end{proof}

\section{Non-linear Rewards and Costs}
\label{app:nonlinear}

\subsection{Concentration for Non-linear Least Squares}

For the non-linear case, define the informative set
\[
\CS_t
=
\{s\le t: g(\alpha_s)>0\}.
\]
The least-squares estimators are
\begin{equation}
\label{eq:thetaLSENL}
\hat{\theta}_t
:=
\argmin_{\theta\in\CF^r}
\sum_{s\in\CS_{t-1}}
\left(
\theta(X_s)-\frac{R_s}{g(\alpha_s)}
\right)^2,
\end{equation}
and
\begin{equation}
\label{eq:muLSENL}
\hat{\mu}_t
:=
\argmin_{\mu\in\CF^c}
\sum_{s\in\CS_{t-1}}
\left(
\mu(X_s)-\frac{C_s}{g(\alpha_s)}
\right)^2.
\end{equation}
Whenever $g(\alpha_s)>0$,
\[
\frac{R_s}{g(\alpha_s)}
=
\theta\opt(X_s)+\xi_s^r,
\qquad
\frac{C_s}{g(\alpha_s)}
=
\mu\opt(X_s)+\xi_s^c.
\]
Thus, after normalization by $g(\alpha_s)$, the regression observations have the same conditionally sub-Gaussian noise as in the original model.

For any function $f$, define
\[
\norm{f}_{\mathcal{D}_t}
=
\sqrt{
\sum_{s\in\CS_{t-1}} f^2(X_s)
}.
\]
The confidence sets are
\begin{align}
\label{eq:nonlinear_conf_sets_appendix}
\mathcal{C}^r_t(\delta')
&=
\left\{
\theta\in\CF^r:
\norm{\theta-\hat{\theta}_t}_{\mathcal{D}_t}^2
\le
\rho^r(t,\delta')
\right\},
\\
\mathcal{C}^c_t(\delta')
&=
\left\{
\mu\in\CF^c:
\norm{\mu-\hat{\mu}_t}_{\mathcal{D}_t}^2
\le
\rho^c(t,\delta')
\right\}.
\end{align}
For finite function classes, we take
\[
\rho^r(t,\delta')
=
8\useconstant{Cr}^2
\log\left(\frac{|\CF^r|}{\delta'}\right),
\qquad
\rho^c(t,\delta')
=
8\useconstant{Cc}^2
\log\left(\frac{|\CF^c|}{\delta'}\right).
\]

\begin{lemma}
\label{lem:villesConcentration}
For any sequence of real-valued random variables $(Z_t)_{t\le T}$ adapted to a filtration $(\mathcal{F}_t)_{t\le T}$, it holds with probability at least $1-\delta$ that, for all $T'\le T$,
\[
\sum_{t=1}^{T'} Z_t
\le
\sum_{t=1}^{T'}
\log\left(
\E_{t-1}\left[e^{Z_t}\right]
\right)
+
\log(\delta^{-1}),
\]
where $\E_{t-1}[\cdot]=\E[\cdot\mid\mathcal{F}_{t-1}]$.
\end{lemma}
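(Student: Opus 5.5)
The plan is to prove \Cref{lem:villesConcentration} with an exponential supermartingale and Ville's maximal inequality.

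\textbf{Step 1: the process.} Define $M_0=1$ and, for $T'\ge 1$,
\[
M_{T'} := \exp\left(\sum_{t=1}^{T'} Z_t - \sum_{t=1}^{T'} \log \E_{t-1}\left[e^{Z_t}\right]\right)
= \prod_{t=1}^{T'} \frac{e^{Z_t}}{\E_{t-1}[e^{Z_t}]}.
\]
I first handle a degenerate case. If $\E_{t-1}[e^{Z_t}]=\infty$ on some event, the claimed inequality holds trivially there, because its right-hand side is $+\infty$. So I may assume these conditional expectations are finite. They are also strictly positive, since $e^{Z_t}>0$.

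\textbf{Step 2: $M$ is a nonnegative martingale.} Each $M_{T'}$ is $\mathcal F_{T'}$-measurable and nonnegative. The factor $1/\E_{T'-1}[e^{Z_{T'}}]$ is $\mathcal F_{T'-1}$-measurable, so it can be pulled out of the conditional expectation:
\[
\E_{T'-1}[M_{T'}] = M_{T'-1}\cdot \frac{\E_{T'-1}[e^{Z_{T'}}]}{\E_{T'-1}[e^{Z_{T'}}]} = M_{T'-1}.
\]
By the tower rule, $\E[M_{T'}]=1$ for every $T'$. This is all that is needed for Ville's inequality.

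\textbf{Step 3: apply Ville's inequality.} For a nonnegative supermartingale with $M_0=1$, Ville's (Doob's maximal) inequality gives
\[
\P\left(\max_{T'\le T} M_{T'} \ge 1/\delta\right) \le \delta.
\]
On the complementary event, which has probability at least $1-\delta$, we have $M_{T'}<1/\delta$ for all $T'\le T$ simultaneously. Taking logarithms gives
\[
\sum_{t=1}^{T'} Z_t < \sum_{t=1}^{T'} \log \E_{t-1}\left[e^{Z_t}\right] + \log(\delta^{-1})
\quad\text{for all } T'\le T,
\]
which is the claim.

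\textbf{Step 4: the uniform-in-time maximal inequality.} This is the only step needing care. Because the horizon $T$ is finite, I will prove it directly with a stopping time rather than cite it. Let $\kappa=\min\{T'\le T: M_{T'}\ge 1/\delta\}$, and set $\kappa=T$ if no such $T'$ exists. This is a bounded stopping time, so optional stopping gives $\E[M_\kappa]=1$. By Markov's inequality,
\[
\P\left(\max_{T'\le T} M_{T'}\ge 1/\delta\right)=\P\left(M_\kappa\ge 1/\delta\right)\le \delta\,\E[M_\kappa]=\delta.
\]
The equality of events holds because, whenever the maximum reaches $1/\delta$, we have $M_\kappa\ge 1/\delta$ by the definition of $\kappa$.
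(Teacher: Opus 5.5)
Your proof is correct and follows the same route as the paper. The process $M_{T'}=\exp\bigl(\sum_{t\le T'}Z_t-\sum_{t\le T'}\log\mathbb{E}_{t-1}[e^{Z_t}]\bigr)$ is a nonnegative martingale with $M_0=1$, and Ville's inequality followed by taking logarithms gives the bound uniformly in $T'$. You go further than the paper in two places: you prove the finite-horizon maximal inequality yourself via the bounded stopping time $\kappa$ and Markov's inequality, where the paper simply cites Ville, and you address infinite conditional moment generating functions, which the paper does not. That degenerate-case reduction needs one fix: on $\{\mathbb{E}_{t-1}[e^{Z_t}]=\infty\}$ only the inequalities with $T'\ge t$ become trivial, so you cannot just discard that event. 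Instead, set the corresponding factor of $M$ to $0$; this makes $M$ a nonnegative supermartingale, and your Step~4 argument goes through unchanged with $\mathbb{E}[M_\kappa]\le 1$.
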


\begin{proof}
Define
\[
M_\tau
=
\exp\left(
\sum_{t=1}^{\tau}
Z_t
-
\sum_{t=1}^{\tau}
\log\left(
\E_{t-1}\left[e^{Z_t}\right]
\right)
\right).
\]
Then $(M_\tau)_{\tau\le T}$ is a nonnegative martingale with $M_0=1$.
Ville's inequality gives
\[
\P\left(\exists \tau\le T: M_\tau>\frac{1}{\delta}\right)
\le
\delta.
\]
Taking logarithms and complements yields the claim.
\end{proof}

\begin{proposition}
\label{prop:concentrationArg}
For any fixed $\eta\in\mathbb{R}$, with probability at least $1-\delta$, for all $\theta\in\CF^r$ and all $T'\le T$,
\[
\eta
\sum_{t=1}^{T'}
\xi_t^r
\left(
\theta\opt(X_t)-\theta(X_t)
\right)
\le
\frac{\eta^2\useconstant{Cr}^2}{2}
\sum_{t=1}^{T'}
\left(
\theta\opt(X_t)-\theta(X_t)
\right)^2
+
\log\left(\frac{|\CF^r|}{\delta}\right).
\]
Similarly, with probability at least $1-\delta$, for all $\mu\in\CF^c$ and all $T'\le T$,
\[
\eta
\sum_{t=1}^{T'}
\xi_t^c
\left(
\mu\opt(X_t)-\mu(X_t)
\right)
\le
\frac{\eta^2\useconstant{Cc}^2}{2}
\sum_{t=1}^{T'}
\left(
\mu\opt(X_t)-\mu(X_t)
\right)^2
+
\log\left(\frac{|\CF^c|}{\delta}\right).
\]
\end{proposition}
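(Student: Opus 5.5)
We prove \Cref{prop:concentrationArg} by applying \Cref{lem:villesConcentration} once for each fixed function and then taking a union bound over the finite class. We give the argument for the reward class; the cost case is identical with $\useconstant{Cc}$ in place of $\useconstant{Cr}$.

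\textbf{Step 1: the increments.} Fix $\theta\in\CF^r$ and $\eta\in\mathbb{R}$. Define
\[
Z_t=\eta\,\xi_t^r\bigl(\theta\opt(X_t)-\theta(X_t)\bigr),
\]
and use the filtration $\mathcal{F}_t=\sigma(\mathcal{H}_t, X_{t+1})$. The context $X_t$ is revealed before the noise $\xi_t^r$, so $\Delta_t:=\theta\opt(X_t)-\theta(X_t)$ is $\mathcal{F}_{t-1}$-measurable. The sequence $Z_t$ is adapted to $(\mathcal{F}_t)$.

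\textbf{Step 2: the log-MGF bound.} \Cref{ass:noise-sub-gaussian} is stated conditionally on $\mathcal{H}_{t-1}$. We read it, as is standard, as holding conditionally on $\mathcal{F}_{t-1}$, i.e.\ also given the current context. Taking $\lambda=\eta\Delta_t$ there gives
\[
\E_{t-1}\bigl[e^{Z_t}\bigr]\le \exp\bigl(\eta^2\Delta_t^2\useconstant{Cr}^2/2\bigr),
\qquad\text{hence}\qquad
\log \E_{t-1}\bigl[e^{Z_t}\bigr]\le \frac{\eta^2\useconstant{Cr}^2}{2}\Delta_t^2 .
\]

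\textbf{Step 3: time-uniform bound for one function.} Apply \Cref{lem:villesConcentration} with confidence $\delta/|\CF^r|$. With probability at least $1-\delta/|\CF^r|$, simultaneously for all $T'\le T$,
\[
\eta\sum_{t=1}^{T'}\xi_t^r\Delta_t\le \frac{\eta^2\useconstant{Cr}^2}{2}\sum_{t=1}^{T'}\Delta_t^2+\log\bigl(|\CF^r|/\delta\bigr).
\]
Uniformity over $T'$ comes directly from Ville's inequality, so no extra union bound over time is needed.

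\textbf{Step 4: union bound over the class.} A union bound over the $|\CF^r|$ functions $\theta\in\CF^r$ gives total failure probability at most $\delta$. This proves the first claim.

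\textbf{Main subtlety.} The only delicate point is measurability: $\Delta_t$ must be predictable, and the conditional sub-Gaussian bound must hold given the current context. This is the standard convention behind \Cref{thm:yasin_theorem} as well, so we adopt it explicitly.

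If the sums are meant to range only over informative rounds $s\in\CS$, we instead set $Z_t=0$ whenever $g(\alpha_t)=0$. Whether $g(\alpha_t)>0$ is determined by $\alpha_t$, which is chosen before the noise is drawn and is therefore $\mathcal{F}_{t-1}$-measurable. The same bound then holds with $\Delta_t$ replaced by $\Delta_t\indicator{g(\alpha_t)>0}$.
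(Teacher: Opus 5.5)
Your proof is correct and follows the paper's argument: fix $\theta\in\CF^r$, apply \Cref{lem:villesConcentration} to $Z_t=\eta\,\xi_t^r(\theta\opt(X_t)-\theta(X_t))$ with the conditional sub-Gaussian MGF bound, then take a union bound over the finite class; your choice of level $\delta/|\CF^r|$ is exactly what produces the $\log(|\CF^r|/\delta)$ term, a bookkeeping step the paper's write-up glosses over. One small fix: in rounds with $g(\alpha_t)=0$ we have $R_t=0$, so $\xi_t^r$ cannot be recovered from $\mathcal{H}_t$ and the all-rounds version needs the noise variables added to $\mathcal{F}_t$, whereas your informative-rounds variant (with $Z_t=0$ on $\{g(\alpha_t)=0\}$) is already adapted to $\sigma(\mathcal{H}_t,X_{t+1})$ and is the version actually used in \Cref{lem:confidRad}.
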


\begin{proof}
Fix $\theta\in\CF^r$ and apply \Cref{lem:villesConcentration} with
\[
Z_t
=
\eta\xi_t^r
\left(
\theta\opt(X_t)-\theta(X_t)
\right).
\]
By conditional sub-Gaussianity,
\[
\log
\E_{t-1}
\left[
e^{Z_t}
\right]
\le
\frac{\eta^2\useconstant{Cr}^2}{2}
\left(
\theta\opt(X_t)-\theta(X_t)
\right)^2.
\]
Thus, for this fixed $\theta$, the desired inequality holds with probability at least $1-\delta$.
A union bound over $\theta\in\CF^r$ gives the stated reward result.
The cost result is identical, replacing $\theta$ by $\mu$ and $\useconstant{Cr}$ by $\useconstant{Cc}$.
\end{proof}

\begin{lemma}[Concentration for non-linear rewards and costs]
\label{lem:confidRad}
Let $\delta'\in(0,1)$.
With probability at least $1-2\delta'$, for all $t\le T$,
\[
\theta\opt\in\mathcal{C}_t^r(\delta'),
\qquad
\mu\opt\in\mathcal{C}_t^c(\delta').
\]
\end{lemma}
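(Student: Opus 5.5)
The plan is the standard least-squares argument for finite classes (Russo--Van Roy style), applied separately to rewards and costs, with a union bound at the end. We do the reward case; the cost case is identical with $\useconstant{Cr}$ replaced by $\useconstant{Cc}$ and $\CF^r$ by $\CF^c$.

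First we fix notation. Write $Y_s = R_s/g(\alpha_s) = \theta\opt(X_s)+\xi_s^r$ for $s\in\CS_{t-1}$. Rounds with $g(\alpha_s)=0$ can be dropped from the sums. The indicator $\indicator{g(\alpha_s)>0}$ is predictable, since $\alpha_s$ is determined before $\xi_s^r$ is revealed. Hence $Z_s=\eta\,\indicator{g(\alpha_s)>0}\,\xi_s^r(\theta\opt(X_s)-\theta(X_s))$ is still adapted, with conditional log-MGF at most $\eta^2\useconstant{Cr}^2(\theta\opt(X_s)-\theta(X_s))^2/2$ on informative rounds and $0$ otherwise. So \Cref{prop:concentrationArg} applies verbatim to sums over $\CS_{t-1}$, uniformly in $t$ by the time-uniform form of \Cref{lem:villesConcentration}.

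Next comes the deterministic step. Since $\hat\theta_t$ minimizes the empirical squared loss, $\sum_{s\in\CS_{t-1}}(\hat\theta_t(X_s)-Y_s)^2\le\sum_{s\in\CS_{t-1}}(\theta\opt(X_s)-Y_s)^2$. Expanding $Y_s=\theta\opt(X_s)+\xi_s^r$ gives
\[
\norm{\hat\theta_t-\theta\opt}_{\mathcal{D}_t}^2\le 2\sum_{s\in\CS_{t-1}}\xi_s^r\bigl(\hat\theta_t(X_s)-\theta\opt(X_s)\bigr).
\]

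Now we invoke \Cref{prop:concentrationArg}, applying its bound to $-\theta$ differences with $\eta$ of the appropriate sign. Because it holds simultaneously for all $\theta\in\CF^r$, it holds in particular for the data-dependent $\hat\theta_t$. This yields
\[
2\sum_{s}\xi_s^r(\hat\theta_t-\theta\opt)(X_s)\le \frac{2}{\eta}\Bigl(\frac{\eta^2\useconstant{Cr}^2}{2}\norm{\hat\theta_t-\theta\opt}_{\mathcal{D}_t}^2+\log\frac{|\CF^r|}{\delta'}\Bigr).
\]
Choosing $\eta=1/(2\useconstant{Cr}^2)$ makes the quadratic coefficient $1/2$. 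Absorbing it on the left gives
\[
\norm{\hat\theta_t-\theta\opt}_{\mathcal{D}_t}^2\le 8\useconstant{Cr}^2\log(|\CF^r|/\delta')=\rho^r(t,\delta'),
\]
that is, $\theta\opt\in\mathcal{C}_t^r(\delta')$ for all $t$ with probability at least $1-\delta'$.

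Repeating the argument for costs and taking a union bound over the two events gives probability at least $1-2\delta'$.

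The main obstacle is getting the signs and constants right. Specifically:
\begin{itemize}
\item \Cref{prop:concentrationArg} is stated for $\theta\opt-\theta$ with fixed $\eta\in\mathbb{R}$, so a negative $\eta$ is needed to bound $\sum\xi(\hat\theta-\theta\opt)$.
\item The constant $8$ must come out exactly. The stated $\rho$ is generous, so a looser choice of $\eta$ also suffices.
\item Filtration measurability of the informative-round indicator must be checked.
\end{itemize}
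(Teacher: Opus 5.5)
Your proposal is correct and follows the paper's proof essentially step for step. Least-squares optimality gives $\norm{\hat\theta_t-\theta\opt}_{\mathcal{D}_t}^2\le 2\sum_{s\in\CS_{t-1}}\xi_s^r(\hat\theta_t-\theta\opt)(X_s)$, then \Cref{prop:concentrationArg} (uniform over $\CF^r$, hence valid for the data-dependent $\hat\theta_t$) with $|\eta|=1/(2\useconstant{Cr}^2)$ yields the constant $8$, and a union bound over rewards and costs gives $1-2\delta'$. Your negative $\eta$ (the paper's $\eta=1/(2\useconstant{Cr}^2)$, read literally, controls the opposite sign of the cross term) and your predictability check on the informative-round indicator are minor points the paper leaves implicit.
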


\begin{proof}
We prove the reward statement; the cost statement is identical.
By the definition of $\hat{\theta}_t$ as a least-squares estimator,
\[
\sum_{s\in\CS_{t-1}}
\left(
\hat{\theta}_t(X_s)-\frac{R_s}{g(\alpha_s)}
\right)^2
\le
\sum_{s\in\CS_{t-1}}
\left(
\theta\opt(X_s)-\frac{R_s}{g(\alpha_s)}
\right)^2.
\]
Since
\[
\frac{R_s}{g(\alpha_s)}
=
\theta\opt(X_s)+\xi_s^r,
\]
this implies
\[
\sum_{s\in\CS_{t-1}}
\left(
\hat{\theta}_t(X_s)-\theta\opt(X_s)
\right)^2
\le
2
\sum_{s\in\CS_{t-1}}
\xi_s^r
\left(
\hat{\theta}_t(X_s)-\theta\opt(X_s)
\right).
\]
Applying \Cref{prop:concentrationArg} with
$\theta=\hat{\theta}_t$ and choosing
$\eta=1/(2\useconstant{Cr}^2)$ yields
\[
\sum_{s\in\CS_{t-1}}
\left(
\hat{\theta}_t(X_s)-\theta\opt(X_s)
\right)^2
\le
8\useconstant{Cr}^2
\log\left(\frac{|\CF^r|}{\delta'}\right).
\]
Thus $\theta\opt\in\mathcal{C}_t^r(\delta')$.
The same argument gives $\mu\opt\in\mathcal{C}_t^c(\delta')$.
A union bound gives probability at least $1-2\delta'$.
\end{proof}

\subsection{Non-linear Regret Analysis}

Define the nonlinear good event
\[
\Varepsilon_{\mathrm{nl}}
:=
\left\{
\theta\opt\in\mathcal{C}_t^r(\delta'),
\;
\mu\opt\in\mathcal{C}_t^c(\delta'),
\;
\hat{\mathcal{A}}_t^f\subseteq\mathcal{A}_t^f,
\;
\forall t\in[T]
\right\}.
\]
By the preceding concentration result and the same feasible-set argument as in \Cref{lem:feasibleSet},
\[
\P(\Varepsilon_{\mathrm{nl}})\ge 1-3\delta'.
\]

For a subset $\widetilde{\CF}\subseteq\CF$, recall the confidence width
\[
w_{\widetilde{\CF}}(X)
=
\sup_{\underline{f},\overline{f}\in\widetilde{\CF}}
\left(
\overline{f}(X)-\underline{f}(X)
\right).
\]

\begin{lemma}
\label{lem:nonLfirstTermReg}
Under the nonlinear good event $\Varepsilon_{\mathrm{nl}}$,
\[
\sum_{t=1}^{T}
\left[
g(\alpha_t\opt)\theta\opt(X_t)
-
g(\tilde{\alpha}_t)\theta\opt(X_t)
\right]
\le
\sum_{t=1}^{T}
w_{\mathcal{C}_t^r}(X_t).
\]
\end{lemma}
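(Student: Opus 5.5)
The plan is to mirror the proof of \Cref{lem:reg_first_term_bound} round by round, and then replace the linear confidence width $\langle X_t,\tilde{\theta}_t-\theta\opt\rangle$ by the eluder-type width $w_{\mathcal{C}_t^r}(X_t)$. Here $\tilde{\alpha}_t\in\arg\max_{\alpha\in\mathcal{A}_t^f(X_t)} g(\alpha)\tilde{\theta}_t(X_t)$ is the nonlinear analogue of \eqref{eq:alpha_tilde}, and $\tilde{\theta}_t(X_t)=\max_{\theta\in\mathcal{C}_t^r(\delta')}\theta(X_t)$.

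\textbf{Step 1 (optimism).} Fix $t$ and work on $\Varepsilon_{\mathrm{nl}}$, so that $\theta\opt\in\mathcal{C}_t^r(\delta')$. Since $g(\alpha_t\opt)\ge 0$, we get
\[
g(\alpha_t\opt)\theta\opt(X_t)\le g(\alpha_t\opt)\tilde{\theta}_t(X_t)\le \max_{\alpha\in\mathcal{A}_t^f(X_t)} g(\alpha)\tilde{\theta}_t(X_t)=g(\tilde{\alpha}_t)\tilde{\theta}_t(X_t).
\]
The second inequality uses that $\alpha_t\opt\in\mathcal{A}_t^f(X_t)$. Subtracting $g(\tilde{\alpha}_t)\theta\opt(X_t)$ gives
\[
\bigl(g(\alpha_t\opt)-g(\tilde{\alpha}_t)\bigr)\theta\opt(X_t)\le g(\tilde{\alpha}_t)\bigl(\tilde{\theta}_t(X_t)-\theta\opt(X_t)\bigr).
\]

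\textbf{Step 2 (width).} The quantity $\tilde{\theta}_t(X_t)$ is a supremum over $\mathcal{C}_t^r(\delta')$. If the maximum is not attained, I would take a near-maximizer $\overline{f}$ and let the slack go to zero. Taking $\underline{f}=\theta\opt\in\mathcal{C}_t^r(\delta')$, the definition \eqref{eq:confWidths} gives
\[
\tilde{\theta}_t(X_t)-\theta\opt(X_t)\le w_{\mathcal{C}_t^r}(X_t).
\]
This difference is nonnegative, and $0\le g(\tilde{\alpha}_t)\le 1$, so the right-hand side of Step 1 is at most $w_{\mathcal{C}_t^r}(X_t)$. Summing over $t\in[T]$ yields the claim.

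\textbf{Main obstacle.} I expect no real obstacle; the only care needed is in Step 2. There I must make sure that $\tilde{\theta}_t$ is read as the pointwise optimistic value at $X_t$, and not as a single function. I must also use nonnegativity of $\tilde{\theta}_t(X_t)-\theta\opt(X_t)$, which holds because $\theta\opt$ lies in the confidence set, so that the factor $g(\tilde{\alpha}_t)\le 1$ can be dropped.
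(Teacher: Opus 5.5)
Your proposal is correct and follows essentially the same route as the paper: optimism via $\theta\opt\in\mathcal{C}_t^r$ and $\alpha_t\opt\in\mathcal{A}_t^f$ gives the per-round bound $g(\tilde{\alpha}_t)\bigl(\tilde{\theta}_t(X_t)-\theta\opt(X_t)\bigr)$, and then $g(\tilde{\alpha}_t)\le 1$ together with the definition of the width finishes the argument. Your explicit observation that $\tilde{\theta}_t(X_t)-\theta\opt(X_t)\ge 0$, which is needed to drop the factor $g(\tilde{\alpha}_t)$, spells out a step the paper leaves implicit.
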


\begin{proof}
For each $t$, since $\theta\opt\in\mathcal{C}_t^r$ and $g(\alpha_t\opt)\ge0$,
\begin{align*}
g(\alpha_t\opt)\theta\opt(X_t)
&\le
\max_{\theta\in\mathcal{C}_t^r}
g(\alpha_t\opt)\theta(X_t) \\
&\le
\max_{\alpha\in\mathcal{A}_t^f}
\max_{\theta\in\mathcal{C}_t^r}
g(\alpha)\theta(X_t) \\
&=
g(\tilde{\alpha}_t)\tilde{\theta}_t(X_t).
\end{align*}
Therefore,
\[
g(\alpha_t\opt)\theta\opt(X_t)
-
g(\tilde{\alpha}_t)\theta\opt(X_t)
\le
g(\tilde{\alpha}_t)
\left(
\tilde{\theta}_t(X_t)-\theta\opt(X_t)
\right).
\]
Since $g(\tilde{\alpha}_t)\le1$ and both $\tilde{\theta}_t,\theta\opt\in\mathcal{C}_t^r$,
\[
g(\tilde{\alpha}_t)
\left(
\tilde{\theta}_t(X_t)-\theta\opt(X_t)
\right)
\le
w_{\mathcal{C}_t^r}(X_t).
\]
Summing over $t$ proves the claim.
\end{proof}

\begin{lemma}
\label{lem:nonLSecTermReg}
Under the nonlinear good event $\Varepsilon_{\mathrm{nl}}$,
\[
\sum_{t=1}^{T}
\left[
g(\tilde{\alpha}_t)\theta\opt(X_t)
-
g(\alpha_t)\theta\opt(X_t)
\right]
\le
\frac{1}{\tau}
\sum_{t=1}^{T}
w_{\mathcal{C}_t^c}(X_t).
\]
\end{lemma}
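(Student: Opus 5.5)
I will mirror the proof of \Cref{lem:reg_sec_term_bound}, working per round and then summing. Fix $t$ and set $b_\delta=\useconstant{Cc}\sqrt{2\log(1/\delta)}$, $B_t^\star=\mu\opt(X_t)+b_\delta$, and $\widetilde{B}_t=\tilde{\mu}_t(X_t)+b_\delta$ whenever \eqref{eq:mu_convex_prog_eluder} is feasible. On $\Varepsilon_{\mathrm{nl}}$ we have $\mu\opt\in\mathcal{C}_t^c$. If $B_t^\star\ge0$, then $\mu\opt$ is feasible for \eqref{eq:mu_convex_prog_eluder}, so by maximality $B_t^\star\le\widetilde{B}_t$.

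\textbf{Trivial cases.} If $\tilde{\theta}_t(X_t)<0$, both $\tilde{\alpha}_t$ and $\alpha_t$ equal $0$, so the summand vanishes. The same holds if $\theta\opt(X_t)\le 0$ together with $g(\tilde{\alpha}_t)\ge g(\alpha_t)$. This inequality holds on the good event, since $\hat{\mathcal{A}}_t^f\subseteq\mathcal{A}_t^f$ and both actions are the right endpoints of their intervals. Otherwise, since $\theta\opt(X_t)\le 1$ (the class is bounded in $[-1,1]$), it suffices to show
\[
g(\tilde{\alpha}_t)-g(\alpha_t)\le \frac{w_{\mathcal{C}_t^c}(X_t)}{\tau}.
\]
If \eqref{eq:mu_convex_prog_eluder} is infeasible, then $\hat{\mathcal{A}}_t^f=[0,1]$, so $g(\alpha_t)=1\ge g(\tilde{\alpha}_t)$ and the difference is nonpositive.

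\textbf{Endpoint comparison.} In the feasible case, I reuse the endpoint comparison from \Cref{lem:reg_sec_term_bound} verbatim. Write $\bar z^\star_t=\min\{1,\tau/B_t^\star\}$ (equal to $1$ if $B_t^\star\le0$) and $\hat z_t=\min\{1,\tau/\widetilde{B}_t\}$. The same case analysis gives
\[
\bar z^\star_t-\hat z_t\le \frac{\widetilde{B}_t-B_t^\star}{\tau}=\frac{\tilde{\mu}_t(X_t)-\mu\opt(X_t)}{\tau}.
\]
In the regime $B_t^\star\ge\tau$ and $\widetilde{B}_t\ge\tau$, this uses $\tau(\widetilde{B}_t-B_t^\star)/(B_t^\star\widetilde{B}_t)\le(\widetilde{B}_t-B_t^\star)/\tau$. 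In the case $B_t^\star\le0$ it uses $(\widetilde{B}_t-\tau)/\widetilde{B}_t\le(\widetilde{B}_t-B_t^\star)/\tau$. The algebra is unchanged because it never used linearity.

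\textbf{Width bound.} The final step uses that $\tilde{\mu}_t$ and $\mu\opt$ both lie in $\mathcal{C}_t^c$: $\tilde{\mu}_t$ because the program constrains it there, and $\mu\opt$ by the good event. Hence $\tilde{\mu}_t(X_t)-\mu\opt(X_t)\le w_{\mathcal{C}_t^c}(X_t)$ by the definition \eqref{eq:confWidths}. Summing over $t$ gives the claim.

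\textbf{Main obstacle.} The delicate step is the sign of $\theta\opt(X_t)$ when the optimistic and true signs disagree. Specifically, when $\theta\opt(X_t)<0$ but $\tilde{\theta}_t(X_t)\ge0$, the summand $(g(\tilde{\alpha}_t)-g(\alpha_t))\theta\opt(X_t)$ is nonpositive because $g(\tilde{\alpha}_t)\ge g(\alpha_t)$ by the inclusion $\hat{\mathcal{A}}_t^f\subseteq\mathcal{A}_t^f$. I therefore bound the summand by $(g(\tilde{\alpha}_t)-g(\alpha_t))\max\{\theta\opt(X_t),0\}\le g(\tilde{\alpha}_t)-g(\alpha_t)$, which handles all sign cases uniformly.
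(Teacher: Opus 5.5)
Your proposal is correct and is essentially the paper's own proof: the same per-round reduction to bounding $g(\tilde{\alpha}_t)-g(\alpha_t)$ via $\theta\opt(X_t)\le 1$, the same endpoint comparison $\bar z_t^\star-\widehat z_t\le(\widetilde B_t-B_t^\star)/\tau$ carried over from \Cref{lem:reg_sec_term_bound}, and the same final width bound from $\tilde{\mu}_t,\mu\opt\in\mathcal{C}_t^c$. You make rigorous two steps the paper passes over: the infeasible program, and the sign case $\theta\opt(X_t)<0\le\tilde{\theta}_t(X_t)$, which you handle using $g(\tilde{\alpha}_t)\ge g(\alpha_t)$ from $\hat{\mathcal{A}}_t^f\subseteq\mathcal{A}_t^f$; the one point neither you nor the paper covers is round $t=1$, where $\alpha_1$ is fixed in advance rather than being the right endpoint of the program-based set, so the bound there holds only up to an additive $1$ on the right-hand side.
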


\begin{proof}
Fix a round $t$ and define
\[
b_\delta
=
\useconstant{Cc}\sqrt{2\log\left(\frac{1}{\delta}\right)},
\qquad
B_t^\star
=
\mu\opt(X_t)+b_\delta,
\qquad
\widetilde{B}_t
=
\tilde{\mu}_t(X_t)+b_\delta.
\]
By construction of the pessimistic cost estimate and on the good event,
\[
B_t^\star\le \widetilde{B}_t
\]
whenever $B_t^\star\ge0$.

If the optimistic reward estimate at $X_t$ is negative, then both $\tilde{\alpha}_t$ and $\alpha_t$ are zero, and the contribution to regret is nonpositive.
We therefore consider the case where the optimistic reward estimate is nonnegative.
Since $g$ is strictly increasing, the selected actions correspond to the largest feasible effective actions.

Let
\[
\tilde{z}_t=g(\tilde{\alpha}_t),
\qquad
z_t=g(\alpha_t).
\]
The true and estimated safe effective-action upper endpoints are
\[
\bar{z}_t^\star
=
\begin{cases}
1, & B_t^\star\le0,\\[0.25em]
\min\left\{1,\dfrac{\tau}{B_t^\star}\right\}, & B_t^\star>0,
\end{cases}
\]
and
\[
\widehat{z}_t
=
\begin{cases}
1, & \widetilde{B}_t\le0,\\[0.25em]
\min\left\{1,\dfrac{\tau}{\widetilde{B}_t}\right\}, & \widetilde{B}_t>0.
\end{cases}
\]
Thus,
\[
\tilde{z}_t=\bar{z}_t^\star,
\qquad
z_t=\widehat{z}_t.
\]
The same endpoint calculation as in \Cref{lem:reg_sec_term_bound} gives
\[
g(\tilde{\alpha}_t)-g(\alpha_t)
=
\tilde{z}_t-z_t
\le
\frac{\widetilde{B}_t-B_t^\star}{\tau}
=
\frac{\tilde{\mu}_t(X_t)-\mu\opt(X_t)}{\tau}.
\]
Since $\theta\opt(X_t)\le1$ by boundedness,
\[
g(\tilde{\alpha}_t)\theta\opt(X_t)
-
g(\alpha_t)\theta\opt(X_t)
\le
\frac{\tilde{\mu}_t(X_t)-\mu\opt(X_t)}{\tau}.
\]
Finally, both $\tilde{\mu}_t$ and $\mu\opt$ belong to $\mathcal{C}_t^c$ on the good event, so
\[
\tilde{\mu}_t(X_t)-\mu\opt(X_t)
\le
w_{\mathcal{C}_t^c}(X_t).
\]
Summing over $t$ gives the result.
\end{proof}

\begin{lemma}[Eluder-dimension width bound]
\label{lem:eluderLemma}
Let $\mathcal{F}$ be a function class with eluder dimension
$d_{\mathrm{eluder}}=d_{\mathrm{eluder}}(\mathcal{F},1/T^2)$, and let $\mathcal{C}_t$ be confidence sets with radius $\rho(T,\delta')$.
Then
\[
\sum_{t=1}^{T}
w_{\mathcal{C}_t}(X_t)
=
\mathcal{O}\left(
\sqrt{
d_{\mathrm{eluder}}T\rho(T,\delta')
}
\right).
\]
\end{lemma}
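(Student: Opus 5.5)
We plan to follow the standard Russo--Van Roy argument (Lemma 2 of \citet{russo2013eluder}), adapted so that the empirical norm $\norm{\cdot}_{\mathcal{D}_t}$ runs only over the informative rounds $\CS_{t-1}$. Write $w_t:=w_{\mathcal{C}_t}(X_t)$, $\epsilon:=1/T^2$ and $d:=d_{\mathrm{eluder}}(\mathcal{F},\epsilon)$. Since all functions take values in $[-1,1]$, we have $w_t\le 2$. We also use that $\rho(t,\delta')\le\rho(T,\delta')=:\rho$ is nondecreasing in $t$; for finite classes it is in fact constant.

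The first and key step is a counting claim: for every $\epsilon'>\epsilon$, the number of rounds with $w_t>\epsilon'$ is at most $\bigl(4\rho/\epsilon'^2+1\bigr)d$. To prove it, fix such a round $t$ and pick $\overline{f},\underline{f}\in\mathcal{C}_t$ with $\overline{f}(X_t)-\underline{f}(X_t)>\epsilon'$. By the triangle inequality in $\norm{\cdot}_{\mathcal{D}_t}$, $\norm{\overline{f}-\underline{f}}_{\mathcal{D}_t}\le 2\sqrt{\rho}$. Consequently, if $X_t$ is $\epsilon'$-dependent on $K$ disjoint subsequences of the previously \emph{observed} contexts, each subsequence contributes more than $\epsilon'^2$ to $\norm{\overline{f}-\underline{f}}_{\mathcal{D}_t}^2$, which forces $K\epsilon'^2<4\rho$. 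Conversely, the eluder-dimension pigeonhole argument shows that any sequence of length $\tau$ contains an element that is $\epsilon'$-dependent on at least $\tau/d-1$ disjoint subsequences of its predecessors. Combining the two bounds gives the claim.

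The main obstacle, and the step we expect to be most delicate, is the mismatch between rounds where $w_t$ is large and rounds that enter $\mathcal{D}_t$. The pigeonhole is applied to the subsequence of contexts with $w_t>\epsilon'$, but the dependence must be measured against contexts that actually appear in $\mathcal{D}_t$. We plan to handle this as follows. A round with $g(\alpha_t)=0$ contributes zero to both regret terms in our use: in \Cref{lem:nonLfirstTermReg,lem:nonLSecTermReg}, a zero action arises only when $\tilde{\theta}_t(X_t)<0$, in which case both $\tilde\alpha_t$ and $\alpha_t$ are zero and the instantaneous bound is nonpositive. The only exception is $\tilde{\alpha}_t>0$, which requires $\tilde\theta_t(X_t)\ge0$ and hence $\alpha_t>0$ whenever $\hat{\mathcal{A}}_t^f$ is nontrivial, and $\hat{\mathcal{A}}_t^f$ always contains a positive action because $\tau>0$. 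So we restrict the sum to informative rounds, and those rounds are exactly the ones fed into $\mathcal{D}$. If this restriction turns out to be awkward to state at the level of this lemma, we will simply assume, as the lemma implicitly does, that every round is added to the dataset.

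Finally, we convert the count into a sum bound. Sort the widths in decreasing order, $w_{i_1}\ge w_{i_2}\ge\cdots$. The counting claim gives, for $w_{i_k}>\epsilon$, the bound $k\le(4\rho/w_{i_k}^2+1)d$, so $w_{i_k}\le\min\{2,\sqrt{4\rho d/(k-d)}\}$ for $k>d$. Summing,
\[
\sum_t w_t\le T\epsilon+2d+\sum_{k=d+1}^{T}2\sqrt{\rho d/(k-d)}\le \tfrac1T+2d+4\sqrt{\rho dT},
\]
which is $\mathcal{O}(\sqrt{d\,T\rho(T,\delta')})$ since $d\le\sqrt{dT}$ and $\rho$ is bounded below by a constant.
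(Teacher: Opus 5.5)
Your proposal is correct. It is the standard Russo--Van Roy argument: the counting bound $\bigl(4\rho/\epsilon'^2+1\bigr)d$ on large-width rounds, followed by summing the sorted widths. That is exactly what the paper relies on, since it states this lemma without proof and only cites the eluder-dimension literature.

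Your extra step of restricting the sum to informative rounds is also right. Non-informative rounds have $\tilde\theta_t(X_t)<0$, so they incur zero regret on the good event, and this keeps every large-width round inside $\mathcal{D}_t$. This fixes a point the paper glosses over: its theorem applies the lemma to sums over all $t\in[T]$, although its confidence sets use only informative rounds.
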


\subsection{Proof of \Cref{thm:eluderRegBound}}

\eluderRegBound*

\begin{proof}
Using \Cref{lem:nonLfirstTermReg,lem:nonLSecTermReg}, we have
\begin{align*}
\mathcal{R}_{\mathcal{C}}(T)
&=
\sum_{t=1}^{T}
\left[
g(\alpha_t\opt)\theta\opt(X_t)
-
g(\tilde{\alpha}_t)\theta\opt(X_t)
\right]
+
\sum_{t=1}^{T}
\left[
g(\tilde{\alpha}_t)\theta\opt(X_t)
-
g(\alpha_t)\theta\opt(X_t)
\right]
\\
&\le
\sum_{t=1}^{T}
w_{\mathcal{C}_t^r}(X_t)
+
\frac{1}{\tau}
\sum_{t=1}^{T}
w_{\mathcal{C}_t^c}(X_t).
\end{align*}
Applying \Cref{lem:eluderLemma} to the reward and cost confidence classes gives
\[
\sum_{t=1}^{T}
w_{\mathcal{C}_t^r}(X_t)
=
\mathcal{O}
\left(
\sqrt{
d^r_{\mathrm{eluder}}T\rho^r(T,\delta'/3)
}
\right),
\]
and
\[
\sum_{t=1}^{T}
w_{\mathcal{C}_t^c}(X_t)
=
\mathcal{O}
\left(
\sqrt{
d^c_{\mathrm{eluder}}T\rho^c(T,\delta'/3)
}
\right).
\]
Therefore,
\[
\mathcal{R}_{\mathcal{C}}(T)
=
\mathcal{O}
\left(
\sqrt{
d^r_{\mathrm{eluder}}T\rho^r(T,\delta'/3)
}
+
\frac{1}{\tau}
\sqrt{
d^c_{\mathrm{eluder}}T\rho^c(T,\delta'/3)
}
\right).
\]
\end{proof}

\end{document}